\renewcommand{\paragraph}[1]{\textbf{#1}~~}
\begin{document}
\maketitle

\begin{abstract}
Tuning scientific and probabilistic machine learning models -- for example, partial differential equations, Gaussian processes, or Bayesian neural networks -- often relies on evaluating functions of matrices whose size grows with the data set or the number of parameters.
While the state-of-the-art for \emph{evaluating} these quantities is almost always based on Lanczos and Arnoldi iterations, the present work is the first to explain how to \emph{differentiate} these workhorses of numerical linear algebra efficiently.
To get there, we derive previously unknown adjoint systems for Lanczos and Arnoldi iterations, implement them in JAX, and show that the resulting code can compete with Diffrax when it comes to differentiating PDEs, GPyTorch for selecting Gaussian process models and beats standard factorisation methods for calibrating Bayesian neural networks.
All this is achieved without any problem-specific code optimisation.
Find the code at {\texttt{https://github.com/pnkraemer/experiments-lanczos-adjoints}}
and install the library with {\texttt{pip install matfree}}.
\end{abstract}

\section{Introduction}
\label{section-introduction}

Automatic differentiation has dramatically altered the development of machine learning models by allowing us to forego laborious, application-dependent gradient derivations. 
The essence of this automation is to evaluate Jacobian-vector and vector-Jacobian products without ever instantiating the full Jacobian matrix, whose column count would match the number of parameters of the neural network. 
Nowadays, everyone can build algorithms around matrices of unprecedented sizes by exploiting this \emph{matrix-free} implementation. However, differentiable linear algebra for Jacobian-vector products and similar operations has remained largely unexplored to this day.
\emph{We introduce a new matrix-free method for automatically differentiating functions of matrices.} 
Our algorithm yields the exact gradients of the forward pass, all gradients are obtained with the same code, and said code runs in linear time- and memory-complexity.

For a parametrised matrix $A=A(\theta) \in \Rbb^{N \times N}$ and an analytic function $f: \Rbb \rightarrow \Rbb$, we call $f(A)$ a function of the matrix (different properties of $A$ imply different definitions of $f(A)$; one of them is applying $f$ to each eigenvalue of $A$ if $A$ is diagonalisable; see \citep{higham2008functions}).
However, we assume that $A$ is the Jacobian of a large neural network or a matrix of similar size and never materialise $f(A)$.
Instead, we only care about the values and gradients of the matrix-function-vector product
\begin{align}\label{equation-function-of-matrix}
(\theta, v) \mapsto f[A(\theta)] v
\end{align}
assuming that $A$ is only accessed via differentiable matrix-vector products.
\cref{table-example-applications} lists examples.
\begin{table}[t]
\caption{Some applications for functions of matrices. Log-determinants apply by combining $\log\det(A) = \trace{\log(A)}$ with stochastic trace estimation, which is why most vectors in this table are Rademacher samples. ``PDE'' / ``ODE'' = ``Partial/Ordinary differential equation''.}
\label{table-example-applications}
\begin{center}
\small
\begin{tabular}{l l l l l l}
\toprule
Application & Function $f$ & Matrix $A$ & Vector $v$ & Parameter $\theta$ &  \\
\midrule
PDEs \& flows \citep{gallopoulos1992efficient,hochbruck2010exponential,xiao2020generative,axelsson2014discrete}  & $e^\lambda$ & PDE discret. & PDE initial value & PDE  \\
Gaussian process \citep{gardner2018gpytorch,wenger2022preconditioning,immer2023stochastic} & $\log(\lambda)$ & Kernel matrix & $v \sim \text{Rademacher}$ & Kernel  \\
Invert. ResNets \citep{behrmann2019invertible,chen2019residual} & $\log(1+\lambda)$ &Jacobian matrix & $v \sim \text{Rademacher}$ & Network  \\
Gaussian sampler \citep{pleiss2018constant} & $\sqrt{\lambda}$ & Covariance matrix & $v \sim N(0, I)$ & Covariance \\
Neural ODE \citep{finlay2020train} & $\lambda^2$ & Jacobian matrix & $v \sim \text{Rademacher}$ & Network \\
\bottomrule
\end{tabular}
\end{center}
\end{table}

Evaluating \cref{equation-function-of-matrix} is crucial for building large machine learning models, e.g., Bayesian neural networks:
A common hyperparameter-calibration loss of a (Laplace-approximated) Bayesian neural network involves the log-determinant of the generalised Gauss--Newton matrix 
\citep{schraudolph2002fast}
\begin{align}\label{equation-gauss-newton-matrix}
A(\alpha) \coloneqq \sum_{(x_i,y_i) \in \text{data}} [D_\theta g](x_i)^\top [D_{g}^2 \rho](y_i, g(x_i)) [D_\theta g](x_i) + \alpha^2 I,
\end{align}
where $D_\theta g$ is the parameter-Jacobian of the neural network $g$, $D_g^2\rho$ is the Hessian of the loss function $\rho$ with respect to $g(x_i)$, and $\alpha$ is a to-be-tuned parameter.
The matrix $A(\alpha)$ in \cref{equation-gauss-newton-matrix} has as many rows and columns as the network has parameters, which makes traditional, cubic-complexity linear algebra routines for log-determinant estimation entirely unfeasible.
To compute this log-determinant, one chooses between either (i) simplifying the problem by pretending that the Hessian matrix is more structured than it actually is, e.g., diagonal \citep{daxberger2021laplace};
or (ii) approximating $\log\det(A)$ by combining stochastic trace estimation \citep{hutchinson1989stochastic}  
\begin{align}
\trace{A} = \Ebb\left[v^\top A v \right] \approx \frac{1}{L}\sum_{\ell=1}^L v_\ell^\top A v_\ell, \quad\text{for}\quad \Ebb\left[v v^\top\right] = I,
\end{align}
with a Lanczos iteration $A(\theta) \approx Q H Q^\top$ \citep{lanczos1950iteration}, to reduce the log-determinant to \citep{ubaru2017fast,chen2023krylov}
\begin{align}
\log\det(A) = \trace{\log{A}} \approx \frac{1}{L} \sum_{\ell=1}^L v_\ell^\top \log(A) v_\ell 
\approx
\frac{1}{L} \sum_{\ell=1}^L v_\ell^\top Q \log(H) Q^\top v_\ell.
\end{align}
The matrix $H$ in $A \approx Q H Q^\top$ has as many rows/columns as we are willing to evaluate matrix-vector products with $A$; thus, it is small enough to evaluate the matrix-logarithm $\log(H)$ in cubic complexity.

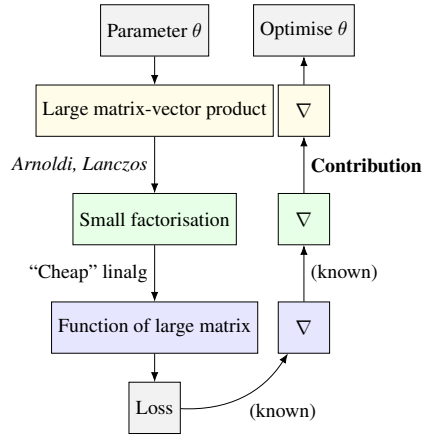
\begin{wrapfigure}[15]{r}{0.41\linewidth}
\vspace{-1\baselineskip}
\centering
\scalebox{0.75}{
\begin{tikzpicture}[]
\node (input)   [fill=yellow!10!white, draw=black, minimum size=0.91cm] { Large matrix-vector product};
\node (decomposition)   [below= of input,fill=green!10!white, draw=black, minimum size=0.91cm] { Small factorisation};
\node (matrixfunction)   [below=of decomposition,fill=blue!10!white, draw=black, minimum size=0.91cm] { Function of large matrix};
\node (loss)   [below= 0.5cm of matrixfunction,fill=gray!10!white, draw=black, minimum size=0.91cm] { Loss};
\node (grad-matrixfunction)   [right= 0.35cm of matrixfunction, fill=blue!10!white, draw=black, minimum size=0.91cm] { $\nabla$};
\node (grad-decomposition)   [above= of grad-matrixfunction, fill=green!10!white, draw=black, minimum size=0.91cm] { $\nabla$};
\node (grad-input)   [above= of grad-decomposition, fill=yellow!10!white, draw=black, minimum size=0.91cm] { $\nabla$};
\node (theta)   [fill=gray!10!white, draw=black, above= 0.5cm of input, minimum size=0.91cm] {Parameter $\theta$};
\node (goal) [fill=gray!10!white, draw=black, above= 0.5cm of grad-input, minimum size=0.91cm] {Optimise $\theta$};
\path[-Latex,draw] 
	(theta) edge node[above] {} (input)
	(input) edge node[left] {  \it Arnoldi, Lanczos } (decomposition)
	(decomposition) edge node[left] {``Cheap'' linalg } (matrixfunction)
	(matrixfunction) edge node[left] {  } (loss)
	(loss) edge[bend right] node[below right] { (known) } (grad-matrixfunction)
	(grad-matrixfunction) edge node[right] {  (known)  } (grad-decomposition)
	(grad-input) edge node[right] { } (goal);
\path[-Latex,draw]
	(grad-decomposition) edge node[right] { \textbf{{Contribution}} } (grad-input);
\end{tikzpicture}
}
\caption{Values (down) and gradients (up) of functions of large matrices.}
\label{figure-gradients-of-functions-of-large-matrices}
\end{wrapfigure}
\paragraph{Contributions}
This article explains how to differentiate not just log-determinants but {any Lanczos and Arnoldi iteration} so we can build loss functions for large models with such matrix-free algorithms (thereby completing the pipeline in \cref{figure-gradients-of-functions-of-large-matrices}).
This kind of functionality has been sorely missing from the toolbox of differentiable programming until now, even though the demand for functions of matrices is high in all of probabilistic and scientific machine learning \citep[e.g.][]{wang2019exact,wu2023large,konig2023efficient,davies2015effective,wenger2022preconditioning,dong2017scalable,pleiss2018constant,pleiss2020fast,gallopoulos1992efficient,wang2015krylov,xiao2020generative,chen2019residual,zhang2022fast,rezende2015variational,finlay2020train,hochbruck2010exponential,hoogeboom2020convolution,hochbruck1998exponential,gardner2018gpytorch,axelsson2014discrete,immer2021scalable,ritter2018scalable,behrmann2019invertible,immer2023stochastic}.

\section{Related work}
\label{section-example-applications}

Here, we focus on applications in machine learning and illustrate how prior work avoids differentiating matrix-free decomposition methods like the Lanczos and Arnoldi iterations. 
\citet{golub2009matrices} discuss applications outside machine learning.

\emph{Generative models}, e.g., normalising flows \citep{lipman2022flow,rezende2015variational}, rely on the change-of-variables formula, which involves the log-determinant of the Jacobian matrix of a neural network.
\citet{behrmann2019invertible} and \citet{chen2019residual} combine stochastic trace estimation with a Taylor-series expansion for the matrix logarithm.
\citet{ramesh2018backpropagation} use Chebyshev expansions instead of Taylor expansions.
That said, \citet{ubaru2017fast} demonstrate how both methods converge more slowly than the Lanczos iteration when combined with stochastic trace estimation.

\emph{Gaussian process model selection} requires values and gradients of log-probability density functions of Gaussian distributions (which involve log-determinants), where the covariance matrix $A(\theta)$ has as many rows and columns as there are data points \citep{williams2006gaussian}.
Recent work \citep{dong2017scalable,gardner2018gpytorch,wenger2022preconditioning,pleiss2018constant,wang2019exact,cutajar2016preconditioning} all uses some combination of stochastic trace estimation with the Lanczos iteration, and unanimously identifies gradients of log-determinants as (``$\diff$'' shall be an infinitesimal perturbation; see \cref{section-method})
\begin{align}
\label{equation-chain-rule-logdeterminant}
\mu \coloneqq \log\det(K(\theta)), 
\quad 
\diff \mu = \trace{K(\theta)^{-1} \diff K(\theta)}.
\end{align}
Another round of stochastic trace estimation then estimates $\diff \mu$ \citep{dong2017scalable,gardner2018gpytorch,cutajar2016preconditioning}.
In contrast, our contribution is more fundamental: not only do we derive the exact gradients of the forward pass, but our formulation also applies to, say, matrix exponentials, whereas \cref{equation-chain-rule-logdeterminant} only works for log-determinants.
\Cref{section-case-study-gaussian-processes} shows how our black-box gradients match state-of-the-art code for \Cref{equation-chain-rule-logdeterminant} \citep{gardner2018gpytorch}.

\emph{Laplace approximations and neural tangent kernels}
face the same problem of computing derivatives of log-determinants but with the generalised Gauss--Newton (GGN) matrix from \cref{equation-gauss-newton-matrix}.
In contrast to the Gaussian process literature, prior work on Laplace approximations prefers structured approximations of the GGN by considering subsets of network weights \citep{daxberger2021bayesian,kristiadi2020being,snoek2015scalable}, or algebraic approximations of the GGN via diagonal, KFAC, or low-rank factors \citep{denker1990transforming,ritter2018scalable,ritter2018online,maddox2020rethinking,sharma2021sketching,miani2022laplacian}.
All such approximations imply simple expressions for log-determinants, which are straightforward to differentiate automatically.
Unfortunately, these approximations discard valuable information about the correlation between weights, so a linear-algebra-based approach leads to superior likelihood calibration (\Cref{section-case-study-bayesian-neural-networks}).

\emph{Linear differential equations}, for instance $\dot y(t) = A y(t)$, $y(0) = y_0$ are solved by matrix exponentials, $y(t) = \exp(A t) y_0$.
By this relation, matrix exponentials have frequent applications not just for the simulation of differential equations \citep[e.g.][]{gallopoulos1992efficient,hochbruck1997krylov}, but also for the construction of exponential integrators \citep{hochbruck2010exponential,hochbruck1998exponential,zhang2022fast}, state-space models \citep{axelsson2014discrete,sarkka2019applied}, and in generative modelling \citep{xiao2020generative,hoogeboom2020convolution,zhang2022fast,rissanen2022generative}.
There are many ways of computing matrix exponentials \citep{moler1978nineteen,moler2003nineteen}, but only \citet{al2009computing} consider the problem of differentiating it and only in forward mode.
In contrast, differential equations have a rich history of adjoint methods \citep[e.g.][]{cao2003adjoint,diehl2011numerical} with high-performance open-source libraries \citep{rackauckas2017differentialequations,kidger2022neural,rackauckas2019diffeqflux,ma2021comparison}.
Still, the (now differentiable) Arnoldi iteration can compete with state-of-the-art solvers in JAX (\cref{section-case-study-pde}).

\section{Problem statement}
\label{section-problem-statement}

Recall $A=A(\theta) \in \Rbb^{N \times N}$ from \cref{section-introduction}.
The focus of this paper is on matrices that are too large to store in memory, like Jacobians of neural networks or discretised partial differential equations:
\begin{assumption}\label{assumption-matvecs-only}
$A(\theta)$ is only accessed via differentiable matrix-vector products $(\theta, v) \mapsto A(\theta) v$.
\end{assumption}
The \emph{de-facto} standard for linear algebra under \Cref{assumption-matvecs-only} are matrix-free algorithms  \citep[e.g.][Chapters 10 \& 11]{golub2013matrix}, like the conjugate gradient method for solving large sparse linear systems \citep{hestenes1952methods}.
But there is more to matrix-free linear algebra than conjugate gradient solvers:
\begin{wrapfigure}[8]{r}{0.4\textwidth}
\vspace{-0.75\baselineskip}
\centering
\scalebox{0.6}{
\begin{tikzpicture}

\node (A)   [draw=gray, fill=blue!20!white, rectangle, minimum width=3.5cm, minimum height=3.5cm] { \large $A(\theta)$ };

\node (Q1)   [draw=gray, fill=orange!20!white, rectangle, minimum width=0.8cm, minimum height=3.5cm, right=0.1cm of A] {\large  $Q$ };

\node (equal)   [right=0.05cm of Q1] {\large  $=$ };

\node (Q2)   [draw=gray, fill=orange!20!white, rectangle, minimum width=0.8cm, minimum height=3.5cm, right=0.05cm of equal] {\large  $Q$ };
\node (H)   [draw=gray, fill=green!20!white, rectangle, minimum width=0.8cm, minimum height=0.8cm, above right=-0.8cm and 0.1cm of Q2] {\large  $H$ };
\node (plus)   [right=0.85cm of Q2] {\large  $+$ };

\node (r)   [draw=gray, fill=red!20!white, rectangle, minimum width=0.2cm, minimum height=3.5cm, right=0.05cm of plus] {\large  $r$ };
\node (ek)   [draw=gray, fill=gray!20!white, rectangle, minimum width=0.8cm, minimum height=0.2cm, above right=-0.5cm and 0.1cm of r] {\large  $e_K$ };

\end{tikzpicture}
}
\caption{Lanczos/Arnoldi iteration.}
\label{figure-arnoldi-iteration}
\end{wrapfigure}
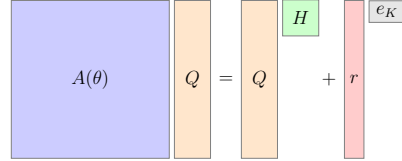
Matrix-free implementations of matrix decompositions usually revolve around variations of the Arnoldi iteration \citep{arnoldi1951principle}, which takes an initial vector $v \in \Rbb^N$ and a prescribed number of iterations $K \in \Nbb$ and produces a column-orthogonal $Q \in \Rbb^{N \times K}$, structured $H \in \Rbb^{K \times K}$, residual vector $r \in \Rbb^N$, and length $c \in \Rbb$ such that
\begin{align}
\label{equation-arnoldi-decomposition}
A Q = Q H + r (e_K)^\top, 
\quad
\text{and}
\quad
 Q e_1 = c v
\end{align}
hold (\Cref{figure-arnoldi-iteration}; $e_1, e_K \in \Rbb^{K}$ are the first and last unit vectors).
If $A$ is symmetric, $H$ is tridiagonal, and the Arnoldi iteration becomes the \emph{Lanczos iteration} \citep{lanczos1950iteration}.
Both iterations are popular for implementing matrix-function-vector products in a matrix-free fashion \citep{higham2008functions,golub2013matrix}, because
the decomposition in \Cref{equation-arnoldi-decomposition} implies $A \approx Q H Q^\top$, thus
\begin{align}
(\theta, v) \mapsto f(A(\theta)) v \approx Q f(H) Q^\top v =c^{-1} Q f(H) e_1.
\end{align}
The last step, $Q^\top v = c^{-1} e_1$, is due to the orthogonality of $Q$.
Since the number of matrix-vector products $K$ rarely exceeds a few hundreds or thousands, the following \Cref{assumption-small-matrix-function-differentiable} is mild:
\begin{assumption}
\label{assumption-small-matrix-function-differentiable}
The map $H \mapsto f(H) e_1$ is differentiable, and $Q$ fits into memory.
\end{assumption}
In summary, we evaluate functions of large matrices by firstly decomposing a large matrix into a product of small matrices (with Lanczos or Arnoldi) and, secondly, using conventional linear algebra to evaluate functions of small matrices.
Functions of small matrices can already be differentiated efficiently \citep{seeger2017auto,walter2010algorithmic,griewank2008evaluating}. 
\emph{This work contributes gradients of the Lanczos and Arnoldi iteration under \Cref{assumption-matvecs-only,assumption-small-matrix-function-differentiable}, and thereby makes matrix-free implementations of matrix decompositions and functions of large matrices (reverse-mode) differentiable.}
\begin{wrapfigure}[14]{r}{0.4\linewidth}
\centering
\includegraphics[width=\linewidth]{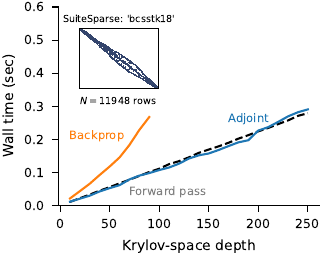}
\caption{%
Backpropagation vs our adjoint method on a sparse matrix \citep{kolodziej2019suitesparse,davis2011university,duff1989sparse}.
}
\label{figure-wall-times-per-vjp}
\end{wrapfigure}
Automatic differentiation, i.e. ``backpropagating through'' the matrix decomposition, is far too inefficient to be a viable option (\cref{figure-wall-times-per-vjp}; setup in \Cref{appendix-section-mode-vjp-wall-times}). Our approach via implicit differentiation or the adjoint method, respectively, leads to gradients that inherit the linear runtime and memory-complexity of the forward pass.

\paragraph{Limitations and future work}
The landscape of Lanczos- and Arnoldi-style matrix decompositions is vast, and some adjacent problems cannot be solved by this single article:
(i) {Forward-mode derivatives} would require a derivation separate from what comes next. Yet, since functions of matrices map many to few parameters (matrices to vectors), reverse-mode is superior to forward-mode anyway \citep[p. 153]{blondel2024elements}.
(ii) We only consider real-valued matrices (for their relevance to machine learning), even though the decompositions generalise to {complex arithmetic} with applications in physics \citep{groenenboom1990solving}.
(iii) We assume $Q$ fits into memory, which relates to combining Arnoldi/Lanczos with full reorthogonalisation \citep{paige1971computation,paige1972computational,paige1976error,borm2012numerical,golub2013matrix}. 
Relaxing this assumption requires gradients of {partial reorthogonalisation} (among other things), which we leave to future work.

\section{The method: Adjoints of the Lanczos and Arnoldi iterations}
\label{section-method}

Numerical algorithms are rarely differentiated automatically, and usually, some form of what is known as ``implicit differentiation'' \citep{blondel2024elements,blondel2022efficient} applies.
The same is true for the Lanczos and Arnoldi iterations.
However, and perhaps surprisingly, we differentiate the iterations like a dynamical system using the ``adjoint method'' \citep{cao2003adjoint,betancourt2020discrete,tran2024geometric}, a variation of implicit differentiation that uses Lagrange multipliers \citep{blondel2024elements}, and not like a linear algebra routine \citep{seeger2017auto,walter2010algorithmic,roberts2020qr}.
To clarify this distinction, we briefly review implicit differentiation before the core contributions of this work in \Cref{section-adjoint-systems,section-matrix-free-implementation}.

\paragraph{Notation}
Let $\diff x$ be an infinitesimal perturbation of some $x$. $D$ is the Jacobian operator, and $\langle \cdot, \cdot \rangle$ the Euclidean inner product between two equally-sized inputs.
For a loss $\rho \in \Rbb$ that depends on some $x$, the linearisation $\diff \rho = D_x \rho\, \diff x$ and the gradient identity
$\diff \rho = \langle \nabla_x \rho, \diff x \rangle$ will be important \citep{minka2000old,giles2008extended}.

\paragraph{Implicit differentiation}
Let $\abf: \theta \mapsto x$ be a numerical algorithm that computes some $x$ from some $\theta$.
Assume that the input and output of $\abf(\cdot)$ satisfy the constraint $\cbf(\theta, \abf(\theta)) = 0$.
For instance, if $\abf(\cdot)$ solves $Ax=b$, the constraint is $\cbf(A, b; x) = Ax - b$ with $\theta \coloneqq \{A, b\}$.
We can use $\cbf(\cdot)$ in combination with the chain rule to find the derivatives of $\abf(\cdot)$, { ($\cbf = 0$ implies $\diff \cbf = 0$)}
\begin{align}\label{equation-implicit-constraint-linearised}
0 
= \diff \cbf(\theta, x) = D_x \cbf(\theta, x) \diff x + D_\theta \cbf(\theta, x) \diff \theta.
\end{align}
In other words, we ``linearise'' the constraint $\cbf(\theta, x) = 0$.
The adjoint method \citep{cea1986conception} proceeds by ``transposing'' this linearisation as follows. 
Let $\rho$ be a loss that depends on $y$ with gradient $\nabla_y \rho$ and recall the gradient identity from the ``Notation'' paragraph above.
Then, for all Lagrange multipliers $\lambda$ with the same shape as the outputs of  $\cbf(\cdot)$, we know that since $\cbf = 0$ implies $\diff \cbf = 0$,
\begin{align}\label{equation-adjoint-method-explained}
\diff \rho 
= \langle \nabla_x \rho, \diff x \rangle
= \langle \nabla_x \rho, \diff x \rangle + \langle \lambda, \diff \cbf \rangle 
= \langle \nabla_x \rho + (D_x \cbf)^\top \lambda, \diff x \rangle + \langle (D_\theta \cbf)^\top \lambda, \diff \theta \rangle
\end{align}
must hold.
By matching \Cref{equation-adjoint-method-explained} to $\diff \rho = \langle \nabla_\theta \rho,  \diff \theta\rangle$ (this time, regarding $\rho$ as a function of $\theta$, not of $x$; recall the ``Notation'' paragraph), we conclude that if $\lambda$ solves the adjoint system
\begin{align} \label{equation-adjoint-system}
\nabla_x \rho + (D_x \cbf)^\top \lambda = 0,
\end{align}
then $\nabla_\theta \rho \coloneqq (D_\theta \cbf)^\top \lambda$ must be the gradient of $\rho$ with respect to input $\theta$.
This is the adjoint method \citep[Section 10.4]{blondel2024elements}.
In automatic differentiation frameworks like JAX \citep{jax2018github}, this gradient implements a vector-Jacobian product with the Jacobian of $\abf(\cdot)$ -- implicitly via the Lagrange multiplier $\lambda$, without differentiating ``through'' $\abf(\cdot)$ explicitly.
In comparison to approaches that explicitly target vector-Jacobian products with implicit differentiation \citep[like][Proposition 10.1]{blondel2024elements}, the adjoint method shines when applied to highly structured, non-vector-valued constraints, 
such as dynamical systems or the Lanczos and Arnoldi iterations.
The reason is that the adjoint method does not change if $\cbf(\cdot)$ becomes matrix- or function-space-valued, as long as we can define inner products and adjoint operators, whereas other approaches (like what \citet{blondel2022efficient} use for numerical optimisers) would become increasingly laborious in these cases.
In summary, to reverse-mode differentiate a numerical algorithm with the adjoint method, we need four steps: (i) find a constraint, (ii) linearise it, (iii) introduce Lagrange multipliers, and (iv) solve the resulting adjoint system.
Carrying out those four steps for the Lanczos and Arnoldi iterations is the main contribution of the paper: \Cref{section-adjoint-systems} states both adjoint systems and \Cref{section-matrix-free-implementation} covers a matrix-free implementation.

\subsection{Adjoint system of the Arnoldi and Lanczos iterations}
\label{section-adjoint-systems}
Let $e_j$ be the $j$th unit vector. Denote by ``$\circ$'' the element-wise matrix product, and define the matrices
\begin{align}
I_\leq \coloneqq [\delta_{i \leq j}]_{i,j=1}^K, 
\quad
I_< \coloneqq [\delta_{i<j}]_{i,j=1}^K,
\quad
I_\ll \coloneqq [\delta_{i+1<j}]_{i,j=1}^K,
\end{align}
so that for example, $I_\leq \circ A$ extracts the lower triangular matrix of $A$ (including the diagonal), and $I_\ll \circ A = 0$ enforces Hessenberg form \citep{golub2013matrix}.
The following two theorems do not require \Cref{assumption-matvecs-only,assumption-small-matrix-function-differentiable}, which are only relevant for analysing the computational complexities. 
\begin{theorem}[Adjoint system of the Arnoldi iteration]
\label{theorem-adjoint-system-of-arnoldi}
Let $K \in \Nbb$, $v \in \Rbb$, and $A \in \Rbb^{N \times N}$, and a loss $\rho(\cdot) \in \Rbb$ be given. 
If $Q \in \Rbb^{N \times K}$, $H \in \Rbb^{K \times K}$, $r \in \Rbb^N$, and $c \in \Rbb$ solve the \emph{forward constraint}
\begin{align}
\label{equation-arnoldi-constraint}
AQ = QH + r (e_K)^\top, 
\quad
Q e_1 = vc, 
\quad
I_\leq \circ [Q^\top Q]=I, 
\quad
I_\ll \circ H = 0,
\quad
 Q^\top r = 0,
\end{align}
and if $\lambda \in \Rbb^N$, $\Lambda \in \Rbb^{N \times K}$, $\gamma \in \Rbb^{K}$, $\Gamma \in \Rbb^{K \times K}$, and $\Sigma \in \Rbb^{K \times K}$ satisfy the \emph{adjoint system}
\begin{subequations}\label{equation-adjoint-system-arnoldi}
\begin{align}
0 
&=
\nabla_Q \rho + A^\top \Lambda - \Lambda H^\top + \lambda (e_1)^\top + Q(I_\leq \circ \Gamma) + Q(I_\leq \circ \Gamma)^\top + r \gamma^\top 
\label{equation-adjoint-constraint-dynamics}
\\
0
&=
\nabla_H \rho - Q^\top \Lambda + I_\ll \circ \Sigma
\label{equation-adjoint-constraint-projection}
\\
0
&=
\nabla_r \rho - \Lambda e_K + Q \gamma 
\label{equation-adjoint-constraint-on-final-multiplier}
\\
0
&=
\nabla_c \rho - v^\top \lambda,
\label{equation-adjoint-constraint-scalar}
\end{align}
\end{subequations}
then the gradients of $\rho$ with respect to $A$ and $v$ are 
\begin{align}\label{equation-gradients-of-arnoldi}
\nabla_A \rho \coloneqq \Lambda Q^\top, \quad \nabla_v \rho \coloneqq \lambda c.
\end{align}
\end{theorem}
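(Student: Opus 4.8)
The plan is to follow the four-step adjoint recipe laid out in the preamble, applied to the Arnoldi forward constraint \eqref{equation-arnoldi-constraint}. The constraint has five components: the recurrence $AQ - QH - r(e_K)^\top = 0$, the normalisation $Qe_1 - vc = 0$, the (partial) orthogonality $I_\leq \circ [Q^\top Q] - I = 0$, the Hessenberg structure $I_\ll \circ H = 0$, and the residual-orthogonality $Q^\top r = 0$. Each of these will receive its own Lagrange multiplier, matched by shape: $\Lambda \in \Rbb^{N\times K}$ for the recurrence, $\lambda \in \Rbb^N$ for the normalisation, $\Gamma$ for the $Q^\top Q$ constraint, $\Sigma$ for the Hessenberg constraint, and $\gamma$ for $Q^\top r = 0$. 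First I would assemble the Lagrangian-style identity
\begin{align}
\diff\rho = \langle \nabla_Q\rho,\diff Q\rangle + \langle \nabla_H\rho,\diff H\rangle + \langle \nabla_r\rho,\diff r\rangle + \nabla_c\rho\,\diff c + \langle \Lambda, \diff(AQ-QH-r e_K^\top)\rangle + \langle \lambda, \diff(Qe_1-vc)\rangle + \langle \Gamma, \diff(I_\leq\circ Q^\top Q)\rangle + \langle \Sigma, \diff(I_\ll\circ H)\rangle + \langle \gamma, \diff(Q^\top r)\rangle,
\end{align}
which is valid because every added term is (a linearisation of) zero.

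Second I would expand each $\diff(\cdot)$ by the product rule and then use adjoint/transpose moves on the inner products to collect the coefficients of $\diff Q$, $\diff H$, $\diff r$, $\diff c$, $\diff A$, and $\diff v$ separately. The relevant manipulations are standard: $\langle \Lambda, A\,\diff Q\rangle = \langle A^\top\Lambda, \diff Q\rangle$, $\langle \Lambda, \diff A\, Q\rangle = \langle \Lambda Q^\top, \diff A\rangle$, $\langle \Lambda, -\diff Q\, H\rangle = \langle -\Lambda H^\top, \diff Q\rangle$, $\langle \Lambda, -Q\,\diff H\rangle = \langle -Q^\top\Lambda, \diff H\rangle$, $\langle \Lambda, -\diff r\, e_K^\top\rangle = \langle -\Lambda e_K, \diff r\rangle$; for the normalisation term, $\langle \lambda, \diff Q\, e_1\rangle = \langle \lambda e_1^\top, \diff Q\rangle$ and $\langle \lambda, -v\,\diff c\rangle = -\langle v^\top\lambda,\diff c\rangle$ contributes to the scalar equation while $\langle \lambda, -\diff v\, c\rangle = -\langle \lambda c, \diff v\rangle$ gives $\nabla_v\rho$; for the $Q^\top Q$ term one uses $\langle \Gamma, I_\leq\circ(\diff Q^\top Q + Q^\top\diff Q)\rangle = \langle (I_\leq\circ\Gamma)+(I_\leq\circ\Gamma)^\top, Q^\top\diff Q\rangle = \langle Q[(I_\leq\circ\Gamma)+(I_\leq\circ\Gamma)^\top],\diff Q\rangle$ after moving the Hadamard mask onto $\Gamma$ and symmetrising; the Hessenberg term simply contributes $I_\ll\circ\Sigma$ to the $\diff H$ coefficient (the mask is self-adjoint and idempotent under the Frobenius pairing); and the $Q^\top r = 0$ term splits as $\langle \gamma, \diff Q^\top r\rangle = \langle r\gamma^\top, \diff Q\rangle$ and $\langle \gamma, Q^\top\diff r\rangle = \langle Q\gamma, \diff r\rangle$. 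Setting the coefficient of $\diff Q$ to zero yields \eqref{equation-adjoint-constraint-dynamics}, the coefficient of $\diff H$ gives \eqref{equation-adjoint-constraint-projection}, the coefficient of $\diff r$ gives \eqref{equation-adjoint-constraint-on-final-multiplier}, and the coefficient of $\diff c$ gives \eqref{equation-adjoint-constraint-scalar}. What remains in $\diff\rho$ is then exactly $\langle \Lambda Q^\top, \diff A\rangle + \langle \lambda c, \diff v\rangle$, and matching against $\diff\rho = \langle \nabla_A\rho,\diff A\rangle + \langle \nabla_v\rho, \diff v\rangle$ delivers \eqref{equation-gradients-of-arnoldi}.

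The one subtlety I would be careful about — and I expect this to be the main obstacle — is the bookkeeping around the Hadamard masks $I_\leq$ and $I_\ll$ and the associated symmetrisation. Specifically, I need that $\langle \Gamma, I_\leq\circ M\rangle = \langle I_\leq\circ\Gamma, M\rangle$ (the mask is self-adjoint under the Frobenius inner product since it is a $0/1$ matrix), and that when $M = Q^\top\diff Q + \diff Q^\top Q$ the two pieces combine into $(I_\leq\circ\Gamma)+(I_\leq\circ\Gamma)^\top$ acting on $Q^\top\diff Q$; this is why the statement has both $Q(I_\leq\circ\Gamma)$ and $Q(I_\leq\circ\Gamma)^\top$ in \eqref{equation-adjoint-constraint-dynamics}. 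One should also double-check that the constraint $I_\leq\circ[Q^\top Q] = I$ genuinely captures the information the forward Arnoldi pass produces (full column-orthogonality is equivalent to this together with symmetry of $Q^\top Q$, which is automatic), so that no perturbation directions are spuriously constrained or left free. Beyond that, the derivation is purely mechanical transposition of a linearised identity, so no convergence, invertibility, or genuine analysis input is needed — the theorem is, as stated, an algebraic identity that holds whenever the forward and adjoint systems are both satisfied. The Lanczos case would follow by the same computation with $H$ tridiagonal (replace the Hessenberg mask by the complementary tridiagonal mask), so I would phrase the argument so that specialisation is immediate.
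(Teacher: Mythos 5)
Your proposal is correct and follows essentially the same route as the paper's proof in \Cref{appendix-section-proof-of-theorem-adjoint-system-of-arnoldi}: linearise the five forward constraints, add shape-matched Lagrange multipliers to $\diff\rho$, transpose each inner product to collect the coefficients of $\diff Q$, $\diff H$, $\diff r$, $\diff c$, $\diff A$, and $\diff v$, and read off the adjoint system and the gradients $\nabla_A\rho = \Lambda Q^\top$, $\nabla_v\rho = \lambda c$. The Hadamard-mask and symmetrisation bookkeeping you flag as the main subtlety is handled in the paper exactly as you describe, so there is nothing to add.
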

\begin{proof}[Sketch of the proof]
To derive the statement, start with \Cref{equation-arnoldi-constraint} as $\cbf(\cdot)$.
Apply the chain- and product rules liberally to get $\diff \cbf(\cdot)$.
Introduce Lagrange multipliers $\lambda$, $\Lambda$, $\gamma$, $\Gamma$, and $\Sigma$ like in the previous section, by adding Lagrange-multiplied constraints to
\begin{align}
\diff \rho  =
\langle \nabla_Q \rho, \diff Q \rangle 
+\langle \nabla_H \rho, \diff H \rangle 
+\langle \nabla_r \rho, \diff r \rangle 
+\langle \nabla_c \rho, \diff c \rangle,
\end{align}
and rearrange the terms to see that \Cref{equation-adjoint-system-arnoldi} implies \Cref{equation-gradients-of-arnoldi}.
Details are in \Cref{appendix-section-proof-of-theorem-adjoint-system-of-arnoldi}.
\end{proof}
\begin{theorem}[Adjoint system of the Lanczos iteration]
\label{theorem-adjoint-system-of-lanczos}
Let a symmetric $A \in \Rbb^{N \times N}$, as well as $v \in \Rbb^N$, $K \in \Nbb$, and a loss $\rho$ be known.
In the following equations, set $b_0\coloneqq 1 \in \Rbb$, $x_0 \coloneqq 0 \in \Rbb^n, \lambda_{K+1}\coloneqq 0$, $\mu_0 \coloneqq 0$, $\nu_0 \coloneqq 0$ to simplify the expressions.
If $x_1, ..., x_{K+1} \in \Rbb^{N}$, and $a_1, ..., a_k, b_1, ..., b_k \in \Rbb^{K}$, satisfy the \emph{forward constraint}
\begin{subequations}
\begin{align}
x_1 - v / (v^\top v)  &= 0,
\\
-b_{k-1}x_{k-1} + (A-a_kI) x_k  -b_k x_{k+1} &= 0,
&
k&=1, ..., K 
\\
x_{k+1}^\top x_{k+1} - 1&= 0, 
&
k&=1, ..., K,
\\
x_{k-1}^\top x_k &= 0, 
&
k&=2, ..., K+1 
\end{align}
\end{subequations}
and if $\lambda_0, ..., \lambda_K \in \Rbb^N$, $\mu_1, ..., \mu_K, \nu_1, ..., \nu_K \in \Rbb$ satisfy the \emph{adjoint system}
\begin{subequations}
\begin{align}
0	&= 
		- \lambda_{K} b_K 
		+ (
			\nabla_{x_{K+1}} \rho 
			+  \mu_K x_{K+1} 
			+ \nu_{K} x_K
		),  
\\
0 
	&= 
		- b_k \lambda_{k+1} 
		+ (A^\top - a_k I)\lambda_{k} 
		- b_{k-1} \lambda_{k-1} 
		+ (
			\nabla_{x_k}\rho
			+ \mu_{k-1} x_{k} 
			+ \nu_k x_{k+1}
			+ \nu_{k-1} x_{k-1}
		),  
	\\
0	&= 
		\nabla_{a_k} \rho 
		- \lambda_{k}^\top x_k,
		\\
0	&= 
		\nabla_{b_k} \rho 
		- \lambda_{k+1}^\top x_k 
		- \lambda_{k}^\top x_{k+1}.
\end{align}
\end{subequations}
where all expressions involving $k$ hold for all $k=K, ..., 1$,
then 
\begin{align}
\nabla_v \rho \coloneqq \frac{\lambda_0^\top x_1}{v^\top v} x_1- \lambda_0, 
\quad 
\nabla_A \rho \coloneqq \sum_{k=1}^K \lambda_{k} x_k^\top
\end{align}
are the gradients of $\rho$ with respect to $v$ and $A$.
\end{theorem}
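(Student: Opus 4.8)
The plan is to follow exactly the four-step recipe of the adjoint method as laid out in \Cref{section-method}, specialised to the Lanczos iteration: (i) write the forward constraint $\cbf=0$; (ii) linearise it; (iii) pair each scalar or vector constraint with its own Lagrange multiplier; (iv) collect the coefficients of each $\diff x_k$, $\diff a_k$, $\diff b_k$ and read off the adjoint system and the gradients. Concretely, the constraint $\cbf$ is the stack of the four families of equations in the statement: the normalisation $x_1 = v/(v^\top v)$, the three-term recurrence $-b_{k-1}x_{k-1} + (A - a_k I)x_k - b_k x_{k+1} = 0$, the unit-norm conditions $x_{k+1}^\top x_{k+1} = 1$, and the orthogonality conditions $x_{k-1}^\top x_k = 0$. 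I would assign $\lambda_0$ to the normalisation row, $\lambda_k$ ($k=1,\dots,K$) to the $k$th recurrence row, $\mu_k$ to the $k$th unit-norm row, and $\nu_k$ to the $k$th orthogonality row, so that the multipliers match the shapes of the constraint outputs ($\lambda_k\in\Rbb^N$, $\mu_k,\nu_k\in\Rbb$), as required by \Cref{equation-adjoint-system}.

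The second step is the linearisation. Applying $\diff$ and the product rule: the normalisation gives $\diff x_1 = \diff v/(v^\top v) - 2 (v^\top \diff v)\, v/(v^\top v)^2$; the recurrence gives $-\diff b_{k-1}\, x_{k-1} - b_{k-1}\diff x_{k-1} + \diff A\, x_k - \diff a_k\, x_k + (A - a_k I)\diff x_k - \diff b_k\, x_{k+1} - b_k \diff x_{k+1} = 0$; the unit-norm rows give $2 x_{k+1}^\top \diff x_{k+1} = 0$; and the orthogonality rows give $\diff x_{k-1}^\top x_k + x_{k-1}^\top \diff x_k = 0$. Then I form $\diff\rho = \sum_k \langle \nabla_{x_k}\rho, \diff x_k\rangle + \sum_k (\nabla_{a_k}\rho)\diff a_k + \sum_k (\nabla_{b_k}\rho)\diff b_k$ and add $\sum_k \langle \lambda_k, (\text{recurrence row } k)\rangle + \langle\lambda_0,(\text{normalisation row})\rangle + \sum_k \mu_k\,(\text{norm row }k) + \sum_k \nu_k\,(\text{orth row }k)$, which is permissible since every added term is a multiple of $\diff$ of something that equals $0$.

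The third step is the bookkeeping: regroup the augmented $\diff\rho$ by the independent perturbations. Using $\langle \lambda_k, A\,\diff x_k\rangle = \langle A^\top\lambda_k, \diff x_k\rangle$ and $\langle\lambda_k,\diff A\,x_k\rangle = \langle \lambda_k x_k^\top, \diff A\rangle$, the coefficient of $\diff x_k$ for an interior index collects $\nabla_{x_k}\rho$, the terms $(A^\top - a_k I)\lambda_k$ from the $k$th recurrence, $-b_{k-1}\lambda_{k-1}$ and $-b_k\lambda_{k+1}$ from the neighbouring recurrences (after re-indexing the $-b_k x_{k+1}$ and $-b_{k-1}x_{k-1}$ pieces), $\mu_{k-1}x_k$ from the $(k-1)$st norm row, and $\nu_k x_{k+1} + \nu_{k-1}x_{k-1}$ from the $k$th and $(k-1)$st orthogonality rows — which is precisely the stated interior adjoint equation; the boundary index $k=K+1$ gives the first stated equation with the conventions $\lambda_{K+1}=0$, $\mu_0=\nu_0=0$, etc. Setting each $\diff x_k$ coefficient to zero kills all $\diff x_k$ contributions. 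The coefficient of $\diff a_k$ is $\nabla_{a_k}\rho - \lambda_k^\top x_k$, and of $\diff b_k$ is $\nabla_{b_k}\rho - \lambda_{k+1}^\top x_k - \lambda_k^\top x_{k+1}$; zeroing these recovers the remaining two adjoint equations. What is left is $\diff\rho = \langle \text{(coefficient of }\diff v\text{)}, \diff v\rangle + \langle \sum_k \lambda_k x_k^\top, \diff A\rangle$, and working out the $\diff v$ coefficient from the normalisation linearisation yields $\nabla_v\rho = \frac{\lambda_0^\top x_1}{v^\top v} x_1 - \lambda_0$ (the $1/(v^\top v)$ and $-2(v^\top\cdot)v/(v^\top v)^2$ pieces combine, using $x_1 = v/(v^\top v)$), and $\nabla_A\rho = \sum_{k=1}^K \lambda_k x_k^\top$, as claimed.

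The main obstacle I anticipate is the re-indexing in the third step: the recurrence row $k$ contributes to the $\diff x_{k-1}$, $\diff x_k$, and $\diff x_{k+1}$ slots, so assembling the full coefficient of a single $\diff x_k$ requires carefully shifting indices across three different constraint rows and likewise across the two orthogonality rows, and then checking that the boundary terms at $k=1$ and $k=K+1$ are correctly absorbed by the stated conventions $b_0=1$, $x_0=0$, $\lambda_{K+1}=0$, $\mu_0=\nu_0=0$. A secondary point of care is that one must verify the adjoint system is actually solvable (the backward recursion for $\lambda_K, \lambda_{K-1},\dots,\lambda_0$ together with the scalar equations for $\mu_k,\nu_k$ is well-posed) — but since the problem is the transpose of the forward linear map, non-degeneracy of the forward Lanczos step is inherited, and this is the same reasoning already invoked for \Cref{theorem-adjoint-system-of-arnoldi}. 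No genuinely new idea beyond the Arnoldi case is needed; the symmetry of $A$ and the three-term (rather than full) recurrence merely make the constraint sparser and hence the algebra more tractable.
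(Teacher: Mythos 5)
Your proposal is correct and follows essentially the same route as the paper's proof in \Cref{appendix-section-proof-of-theorem-adjoint-system-of-lanczos}: form the Lagrangian from the four families of forward constraints with multipliers $\lambda_k$, $\mu_k$, $\nu_k$, differentiate, sort the resulting expression by $\diff x_k$, $\diff a_k$, $\diff b_k$, $\diff v$, $\diff A$, and set the first three coefficient groups to zero to obtain the adjoint system and the gradient formulas. The only (immaterial) difference is that you linearise first and then attach multipliers, whereas the paper attaches multipliers first and then differentiates -- an ordering the paper itself notes is interchangeable.
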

\begin{proof}[Sketch of the proof]
This theorem is proven similarly to that of \Cref{theorem-adjoint-system-of-arnoldi}, but instead of a few equations involving matrices, we have many equations involving scalars because for symmetric matrices, $H$ must be tridiagonal \citep{golub2013matrix}, and we expand $AQ = QH + r (e_{K+1})^\top$ column-wise. The coefficients $a_k$ and $b_k$ are the tridiagonal elements in $H$. We rename $q_k$ from Arnoldi to $x_k$ for Lanczos to make it easier to distinguish the two different sets of constraints.
Details: \Cref{appendix-section-proof-of-theorem-adjoint-system-of-lanczos}.
\end{proof}

\subsection{Matrix-free implementation}
\label{section-matrix-free-implementation}

\paragraph{Solving the adjoint systems}
To compute $\nabla_A \rho$ and $\nabla_v \rho$, we need to solve the adjoint systems.
When comparing the forward constraints to the adjoint systems, similarities emerge: for instance, the adjoint system of the Arnoldi iteration follows the same $A^{(\top)} H - \Lambda H^{(\top)} + \text{rest} = 0$ structure as the forward constraint.
This structure suggests deriving a recursion for the backward pass that mirrors that of the forward pass. \Cref{appendix-section-solving-the-adjoint-system-in-detail} contains this derivation and contrasts the resulting algorithm with that of the forward pass.
The main observation is that the complexity of the adjoint passes for Lanczos and Arnoldi mirrors that of the forward passes. Gradients can be implemented purely with matrix-vector products, which is helpful because it makes our custom backward pass as matrix-free as backpropagation ``through'' the forward pass would be.
This matrix-free implementation in combination with the efficient recursions in \Cref{theorem-adjoint-system-of-arnoldi,theorem-adjoint-system-of-lanczos}  explains the significant performance gains of our method compared to naive backpropagation, observed in \Cref{figure-wall-times-per-vjp}.

\Cref{theorem-adjoint-system-of-arnoldi,theorem-adjoint-system-of-lanczos}'s expressions for $\nabla_A \rho$ are not directly applicable when we only have matrix-vector products with $A$.
Fortunately, parameter-gradients emerge from matrix-gradients:
\begin{corollary}[Parameter gradients] 
\label{corollary-parameter-gradients-arnoldi}
Under \Cref{assumption-matvecs-only} and the assumptions of \cref{theorem-adjoint-system-of-arnoldi}, and if $A$ is parametrised by some $\theta$, the gradients of $\rho$ with respect to $\theta$ are 
\begin{align}
\nabla_\theta \rho = \sum_{k=1}^K  \nabla \left[\theta \mapsto (e_k)^\top Q^\top A(\theta)^\top \Lambda e_k \right],
\end{align}
which can be assembled online during the backward pass.
For the Lanczos iteration, we assume the conditions of \Cref{theorem-adjoint-system-of-lanczos} instead of \Cref{theorem-adjoint-system-of-arnoldi}, replace $Qe_k$ and $\Lambda e_k$ with $x_k$ and $\lambda_k$, let the sum run from $k=0$ to $k=K$, and the rest of this statement remains true.
\end{corollary}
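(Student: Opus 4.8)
The plan is to derive the parameter gradient from the matrix gradient $\nabla_A\rho$ that \Cref{theorem-adjoint-system-of-arnoldi} already provides, using one further application of the chain rule, and then to rewrite the resulting vector-Jacobian product with $A(\cdot)$ as a sum of $K$ rank-one contributions, each of which only touches $A(\theta)$ through a single matrix-vector product. First I would observe that, by the logic of the adjoint method recalled in \Cref{section-method}, $\nabla_A\rho = \Lambda Q^\top$ is the \emph{total} gradient of $\rho$ with respect to $A$: the implicit dependence of the Arnoldi outputs $Q,H,r,c$ on $A$ has already been absorbed into the Lagrange multipliers $\lambda,\Lambda,\gamma,\Gamma,\Sigma$ solving \Cref{equation-adjoint-system-arnoldi}. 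Consequently, once we set $A=A(\theta)$, the matrices $Q$ and $\Lambda$ may be frozen (treated as constants, i.e. a stop-gradient), and the only remaining $\theta$-dependence that contributes is the explicit one inside $A(\theta)$.

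The core computation is then pure trace manipulation, so that $\nabla_A\rho$ is never materialised as a dense $N\times N$ matrix. Since the Euclidean inner product on matrices is the Frobenius inner product, $\diff\rho = \langle \nabla_A\rho,\diff A\rangle$ together with $\diff A = \diff A(\theta)$ gives
\begin{align}
\langle \Lambda Q^\top, \diff A\rangle = \trace{Q\Lambda^\top \diff A} = \trace{\Lambda^\top \diff A\, Q} = \sum_{k=1}^K (\Lambda e_k)^\top \diff A\,(Q e_k),
\end{align}
where the last step expands the trace of the $K\times K$ matrix $\Lambda^\top \diff A\, Q$ column by column. Each summand is exactly the differential of the scalar map $\theta\mapsto (\Lambda e_k)^\top A(\theta)(Q e_k) = (e_k)^\top Q^\top A(\theta)^\top \Lambda e_k$ (a scalar equals its transpose) with $Q$ and $\Lambda$ held fixed. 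Matching $\diff\rho = \sum_{k=1}^K \diff\big[(e_k)^\top Q^\top A(\theta)^\top \Lambda e_k\big]$ against the gradient identity $\diff\rho = \langle \nabla_\theta\rho,\diff\theta\rangle$ yields the claimed formula. Under \Cref{assumption-matvecs-only}, each of these $K$ gradients is the gradient of a scalar assembled from a single differentiable matrix-vector product $A(\theta)(Qe_k)$, hence obtainable with one vector-Jacobian product through $A(\cdot)$; and because the backward recursion for \Cref{equation-adjoint-system-arnoldi} emits the columns $Qe_k$ and $\Lambda e_k$ one at a time (the sense in which the adjoint pass ``mirrors'' the forward pass, cf.\ \Cref{section-matrix-free-implementation}), the running sum can be accumulated online without storing anything of size $N\times N$. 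The Lanczos statement follows by the identical argument after substituting $\nabla_A\rho = \sum_k \lambda_k x_k^\top$ from \Cref{theorem-adjoint-system-of-lanczos} and replacing $Qe_k\mapsto x_k$, $\Lambda e_k\mapsto\lambda_k$; extending the sum to $k=0$ is harmless since $x_0=0$.

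I expect the one point that needs care — rather than a calculational obstacle — to be the justification that $Q$ and $\Lambda$ may legitimately be held constant while differentiating through $A(\theta)$: concretely, that the quantity $\nabla_A\rho$ returned by \Cref{theorem-adjoint-system-of-arnoldi} is the total derivative, so that composing with $\theta\mapsto A(\theta)$ drops no chain-rule term and introduces no double counting. Everything else (the trace cycling, the rank-one split into the $e_k$-indexed scalars, the online accumulation, and the matching complexity count) is routine, and the proof should be short.
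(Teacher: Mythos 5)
Your proposal is correct and follows essentially the same route as the paper's proof in \Cref{appendix-section-proof-of-corollary-parameter-gradients-arnoldi}: both start from $\diff\rho = \langle \nabla_A\rho, \diff A\rangle$ with the multipliers frozen, cycle the trace to split the vector-Jacobian product into $K$ rank-one scalar maps indexed by $e_k$, and observe that these can be accumulated online during the backward pass. The only cosmetic difference is that the paper works one parameter coordinate $\theta_j$ at a time and then stacks the partials (invoking symmetry of $A$ in the Lanczos case), whereas you work directly with the Frobenius inner product; your remark that $x_0=0$ makes the $k=0$ term vacuous correctly reconciles the Lanczos index range.
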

\begin{proof}[Sketch of the proof]
The proof of this identity combines the expression(s) for $\nabla_A \rho$ from \Cref{theorem-adjoint-system-of-arnoldi,theorem-adjoint-system-of-lanczos} with $\diff A = D_\theta A \diff \theta$.
The derivations are lengthy and therefore relegated to \Cref{appendix-section-proof-of-corollary-parameter-gradients-arnoldi}.
\end{proof}
\begin{wrapfigure}[13]{r}{0.4\linewidth}
\vspace{-1.4\baselineskip}
\begin{center}
\captionof{table}{Accuracy loss when differentiating the Arnoldi iteration on a Hilbert matrix in double precision ($\phi:$ decompose with a full-rank Arnoldi iteration, then reconstruct the original matrix; measure $\|\partial \phi - I\|$; details in \Cref{appendix-section-accuracy-loss-hilbert}).}
\label{table-accuracy-loss-hilbert}
\begin{tabular}{l c }
\toprule
& Loss of accuracy\\
\midrule
Adjoint w/o proj. &$5.83 \cdot 10^{-3}$ \\
Adjoint w/ proj. & \cellcolor[gray]{0.9}$\mathbf{1.17 \cdot 10^{-10}}$ \\
Backprop. & \cellcolor[gray]{0.9}$\mathbf{1.17 \cdot 10^{-10}}$\\
\bottomrule
\end{tabular}
\end{center}
\end{wrapfigure}
\paragraph{Reorthogonalisation}
It is well known that the Lanczos and Arnoldi iterations suffer from a loss of orthogonality and that reorthogonalisation of the columns in $Q$ is often necessary \citep{paige1971computation,paige1972computational,paige1976error,borm2012numerical}. 
Reorthogonalisation does not affect the forward constraints, so the adjoint systems remain the same with and without reorthogonalisation.
But adjoint systems also suffer from a loss of orthogonality:
The equivalent of orthogonality for the adjoint system is the projection constraint in \cref{equation-adjoint-constraint-projection}, which constrains the Lagrange multipliers $\Lambda$ to a hyperplane defined by $Q$ and other known quantities.
The constraint can -- and should (\cref{table-accuracy-loss-hilbert}) -- be used whenever the forward pass requires reorthogonalisation.%
\footnote{The adjoint system of the Lanczos iteration does not admit this projection constraint, but we can implement re-orthogonalised Lanczos via calling the Arnoldi code. This induces only minimal overhead because fully reorthogonalised Lanczos code has roughly the same complexity as Arnoldi code.}
In the case studies below, we always use full reorthogonalisation on the forward and adjoint pass, also for the Arnoldi iteration \citep[Table 7.1]{borm2012numerical}, even though this is slightly less common than for the Lanczos iteration.

\paragraph{Summary (before the case studies)}
The main takeaway from \Cref{section-adjoint-systems,section-matrix-free-implementation} is that now, we do not only have closed-form expressions for the gradients of Arnoldi and Lanczos iterations (\Cref{theorem-adjoint-system-of-arnoldi,theorem-adjoint-system-of-lanczos}), but that we can compute them in the same complexity as the forward pass, in a numerically stable way, and evaluate parameter-gradients in linear time- and space-complexity (\Cref{corollary-parameter-gradients-arnoldi}).
While some of the derivations are somewhat technical, the overall approach follows the general template for the adjoint method relatively closely.
The resulting algorithm beats backpropagation ``through'' the iterations by a margin in terms of speed (\Cref{figure-wall-times-per-vjp}) and enjoys the same stability gains from reorthogonalisation as the forward pass (\Cref{table-accuracy-loss-hilbert}).
Our open-source implementation of reverse-mode differentiable Lanczos and Arnoldi iterations can be installed via ``\texttt{pip install matfree}''.
Next, we put this code to the test on three challenging machine-learning problems centred around functions of matrices to see how it fares against state-of-the-art differentiable implementations of exact Gaussian processes (\Cref{section-case-study-gaussian-processes}), differential equation solvers (\Cref{section-case-study-pde}), and Bayesian neural networks (\Cref{section-case-study-bayesian-neural-networks}).

\section{Case study: Exact Gaussian processes}
\label{section-case-study-gaussian-processes}

Model selection for Gaussian processes has arguably been the strongest proponent of the Lanczos iteration and similar matrix-free algorithms in recent years \citep{gardner2018gpytorch,dong2017scalable,wenger2022preconditioning,wu2023large,wang2019exact,pleiss2018constant,wilson2015kernel}, and most of these efforts have been bundled up in the GPyTorch library \citep{gardner2018gpytorch}.
For example, GPyTorch defaults to choosing a Lanczos iteration over a Cholesky decomposition as soon as the dataset exceeds 800 data points.\footnote{Parameter \texttt{max\_cholesky\_size}: \texttt{https://docs.gpytorch.ai/en/stable/settings.html}.}
Calibrating hyperparameters of Gaussian process models involves optimising log-marginal-likelihoods of the regression targets, which requires computing $x^\top A^{-1} x$ and $\log\det(A)$ for a covariance matrix $A$ with as many rows and columns as there are data points. 
Recent works \citep{gardner2018gpytorch,dong2017scalable,wenger2022preconditioning,wu2023large,wang2019exact,pleiss2018constant,wilson2015kernel} unanimously suggest to differentiate log-determinants via
$\mu \coloneqq \trace{\log(A)}$ and $\diff \mu = \trace{A^{-1} \diff A}$ (\cref{equation-chain-rule-logdeterminant}).
Since we seem to be the first to take a different path, benchmarking Gaussian processes in comparison to GPyTorch is a good first testbed for our gradients.

\begin{table}[t]
\caption{Our method yields the same root-mean-square errors (RMSEs) as GPyTorch. It reaches lower training losses but is $\approx$ 20$\times$ slower per epoch due to different matrix-vector-product backends (see \Cref{appendix-section-low-memory-matrix-vector-products}). Three runs, significant improvements in bold.
 We use an 80/20 train/test split.}
\label{table-gaussian-process-results}
\begin{center}
\small
\begin{tabular}{ l c c c c c c} 
\toprule
Dataset 
	& Size
	& Dim.
	& Method 
	& RMSE ~$\downarrow$
	& Final training loss ~$\downarrow$
	& Runtime (s/epoch) ~$\downarrow$
	\\
\midrule
\multirow{2}{4.55em}{\texttt{elevators}}
	& \multirow{2}{2.0em}{16,599}
	& \multirow{2}{1.0em}{18}
		& Adjoints  
		& 0.09 $\pm$ 0.002
		& \cellcolor[gray]{0.9}{\bf -0.91 $\pm$ 0.025}
		& ~~1.69 $\pm$ 0.000
		\\ 
	&
	&
		& GPyTorch 
		& 0.09 $\pm$ 0.003
		& -0.63 $\pm$ 0.062	
		& ~~\cellcolor[gray]{0.9}{\bf 0.10 $\pm$ 0.004}
		\\ 
\midrule
\multirow{2}{4.55em}{\texttt{protein}}
	& \multirow{2}{2.0em}{45,730}
	& \multirow{2}{1.0em}{9}
		& Adjoints  
		& 0.39 $\pm$ 0.005
		& ~0.73 $\pm$ 0.300
		& 12.62 $\pm$ 0.172
		\\ 
	&
	&
		& GPyTorch 
		& 0.39 $\pm$ 0.005
		& ~0.73 $\pm$ 0.075
		& ~~\cellcolor[gray]{0.9}{\bf 0.73 $\pm$ 0.039}
		\\ 
\midrule
\multirow{2}{4.55em}{\texttt{kin40k}}
	& \multirow{2}{2.0em}{40,000}
	& \multirow{2}{1.0em}{8}
		& Adjoints  
		& 0.12 $\pm$ 0.004
		& \cellcolor[gray]{0.9}{\bf -0.30 $\pm$ 0.078}
		& ~~8.27 $\pm$ 0.004
		\\ 
	&
	&
		& GPyTorch 
		& 0.10 $\pm$ 0.010
		& -0.26 $\pm$ 0.094
		& ~~\cellcolor[gray]{0.9}{\bf 0.26 $\pm$ 0.024}
		\\ 
\midrule
\multirow{2}{4.55em}{\texttt{kegg\_dir}}
	& \multirow{2}{2.0em}{48,827}
	& \multirow{2}{1.0em}{20}
		& Adjoints  
		& 0.12 $\pm$ 0.002
		& -0.59 $\pm$ 0.295
		& 13.25 $\pm$ 0.005
		\\ 
	&
	&
		& GPyTorch 
		& 0.12 $\pm$ 0.005
		& -0.41 $\pm$  0.054
		& ~~\cellcolor[gray]{0.9}{\bf 0.62 $\pm$ 0.262}
		\\ 
\midrule
\multirow{2}{4.55em}{\texttt{kegg\_undir}}
	& \multirow{2}{2.0em}{63,608}
	& \multirow{2}{1.0em}{26}
		& Adjoints  
		& 0.12 $\pm$ 0.002
		& \cellcolor[gray]{0.9}{\bf -0.69 $\pm$ 0.263}
		& 24.20 $\pm$ 0.004
		\\ 
	&
	&
		& GPyTorch 
		& 0.12 $\pm$ 0.003
		& -0.40 $\pm$ 0.039
		& ~~\cellcolor[gray]{0.9}{\bf 1.67 $\pm$ 0.532}
		\\ 

\bottomrule
\end{tabular}
\end{center}
\end{table}
\paragraph{Setup: Like GPyTorch's defaults}
We mimic recent suggestions for scalable Gaussian process models \citep{wang2019exact,wenger2022preconditioning}: we implement a pivoted Cholesky preconditioner \citep{harbrecht2012low} and combine it with conjugate gradient solvers for $x^\top K^{-1} x$ (which can be differentiated efficiently). We estimate the log-determinant stochastically via $\log\det(A) = \trace{\log(A)} = E[v^\top \log(A) v]$, and compute $\log(A) v$ via the Lanczos iteration. While all of the above is common for ``exact'' Gaussian processes \citep{wenger2022preconditioning,gardner2018gpytorch,wang2019exact} (``exact'' as opposed to variational approaches, which are not relevant for this comparison), there are three key differences between our code and GPyTorch's:
(i) GPyTorch is in Pytorch and uses KeOps \citep{charlier2021kernel} for efficient kernel-matrix-vector products. We use JAX and must build our own low-memory matrix-vector products (\cref{appendix-section-low-memory-matrix-vector-products}).
(ii) GPyTorch runs all algorithms adaptively (we specify tolerances and maximum iterations as much as possible). We use adaptive conjugate gradient solvers and fixed ranks for everything else.
\textit{(iii) GPytorch differentiates the log-determinant with a tailored approximation of \cref{equation-chain-rule-logdeterminant} \citep{gardner2018gpytorch}; we embed our gradients of the Lanczos iteration into automatic differentiation.}
To keep the benchmark objective, we mimic the parameter suggestions from GPyTorch's default settings, and optimise hyperparameters of a Mat\'ern$\left(\frac{3}{2}\right)$ model on UCI datasets with the Adam optimiser \citep{KingBa15}.
\cref{appendix-experiment-details-gaussian-processes} lists parameters and discusses the datasets.

\paragraph{Analysis: Trains like GPyTorch; large scale only limited by matrix-vector-product backends}
In this benchmark, we are looking for low reconstruction errors, fast runtimes and well-behaved loss functions.
\cref{table-gaussian-process-results} shows that this is the case for both implementations: the reconstruction errors are essentially the same, and both methods converge well (we achieve lower training losses).
This result shows that by taking a numerically exact gradient of the Lanczos iteration, and leaving everything else to automatic differentiation, matches the performance of state-of-the-art solvers.
Larger datasets are only limited by the efficiency of our matrix-vector products (in comparison to KeOps); \cref{appendix-section-low-memory-matrix-vector-products} discusses this in detail.
Overall, this result strengthens the democratisation of exact Gaussian processes because it reveils a simple yet effective alternative to GPyTorch's domain-specific gradients.

\section{Case study: Physics-informed machine learning with PDEs}
\label{section-case-study-pde}

Much of the paper thus far discusses functions of matrices in the context of log-determinants.
So, in order to demonstrate performance for (i) a problem that is not a log-determinant and (ii) for a non-symmetric matrix which requires Arnoldi instead of Lanczos, we learn the coefficient field $\omega$ of
\begin{align}
\frac{\partial^2}{\partial t^2}u(t; x_1, x_2) = \omega(x_1, x_2)^2 \left[\frac{\partial^2}{\partial x_1^2} u(t; x_1, x_2) + \frac{\partial^2}{\partial x_2^2}u(t; x_1, x_2)\right]
\end{align}
subject to Neumann boundary conditions.
We discretise this equation on a $128\times128$ grid in space and transform the resulting $128^2$-dimensional second-order ordinary differential equation into a first-order differential equation, $\dot w = A w$, $w(0)= w_0$, with solution operator $w(t) = \exp(A t) w_0$.
The system matrix $A$ is sparse, asymmetric, and has $32,768$ rows and columns.
We sample a true $\omega$ from a Gaussian process with a square exponential kernel and generate data by sampling $256$ initial conditions and solving the equation numerically with high precision. Details are in \cref{appendix-section-pde-data}.

\paragraph{Setup: Arnoldi vs Diffrax's Runge-Kutta methods for a 250k parameter MLP}
We learn $\omega$ with a multi-layer perceptron (MLP) with approximately 250,000 parameters.
We had similar reconstructions with fewer parameters but use 250,000 to display how gradients of the Arnoldi iteration scale to many parameters.
We compare an implementation of the solution operator $(\theta, w_0) \mapsto \exp(A(\theta))w_0$ with the Arnoldi iteration to Diffrax's \citep{kidger2022neural} implementation of ``Dopri5'' \citep{dormand1980family,shampine1986some} 
\begin{wrapfigure}[14]{r}{0.44\linewidth}
\vspace{-0.5\baselineskip}
\includegraphics[width=\linewidth]{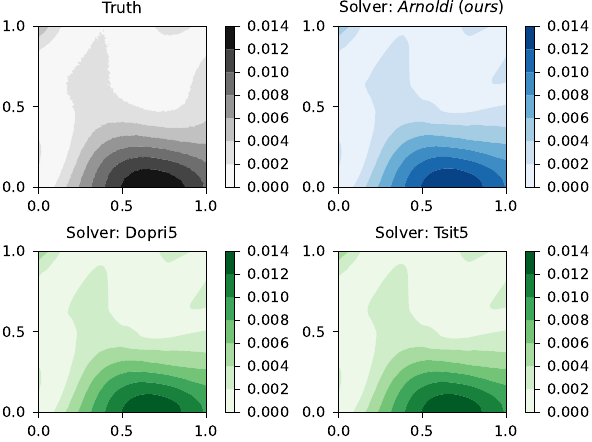}
\caption{All methods find the truth.}
\label{figure-pde-contours}
\end{wrapfigure}
with a differentiate-then-discretise adjoint \citep{chen2018neural}
as well as ``Tsit5'' \citep{tsitouras2011runge} with a discretise-then-differentiate adjoint (recommended by \citep{kidger2022neural,rackauckas2017differentialequations}).
All methods receive equal matrix-vector products per simulation.

\paragraph{Analysis: All methods train, but Arnoldi is more accurate for fixed matrix-vector-product budgets}
We evaluate the approximation errors in computing the values and gradients of a mean-squared error loss for all three solvers and then use the solvers to train the MLP.
We are looking for low approximation errors for few matrix-vector products and for a good reconstruction of the truth.
\cref{figure-pde-errors-all} shows the results. 
\begin{figure}[t]
\centering
\includegraphics[width=\linewidth]{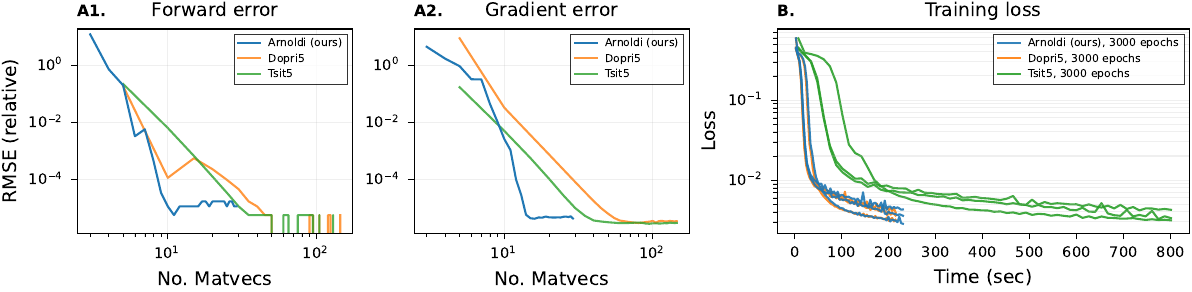}
\caption{Arnoldi's superior convergence on the forward pass (A1) is inherited by the gradients (A2; mind the shared $y$-axis) and ultimately leads to fast training (B). For training, Arnoldi uses ten matrix-vector products, and the other two use 15 (so they have equal error $\approx 10^{-4}$ in A1 and A2.)}
\label{figure-pde-errors-all}
\end{figure}
\begin{table}[t]
\caption{All three methods reconstruct the parameter well (std.-deviations exceed differences for test-loss and RMSE), but Arnoldi and Dopri5 are faster than Tsit5. 
Dopri5 uses the \texttt{BacksolveAdjoint}, and Tsit5 the \texttt{RecursiveCheckpointAdjoint} in Diffrax \citep{kidger2022neural}.
We contribute Arnoldi's adjoints.}
\label{table-pde-results}
\begin{center}
\begin{tabular}{lccc}
\toprule
 & Arnoldi (adjoints; ours) & Dopri5 (diff. $\rightarrow$ disc.)  & Tsit5 (disc. $\rightarrow$ diff.) \\
\midrule
Loss on test set 
	& 6.1e-03 $\pm$ 3.3e-04 
	& 6.3e-03 $\pm$ 5.7e-04  
	& 5.9e-03 $\pm$ 2.2e-04
	\\
Parameter RMSE 
	& 2.9e-04 $\pm$ 4.4e-05
	& 2.6e-04 $\pm$ 5.0e-05 
	& 2.7e-04 $\pm$ 5.2e-05
	\\
Runtime per epoch 
	& \cellcolor[gray]{0.9}{\bf 7.7e-02 $\pm$ 1.8e-05}
	& \cellcolor[gray]{0.9}{\bf 7.2e-02 $\pm$ 3.4e-05}
	& 2.7e-01 $\pm$ 1.1e-05
	\\
\bottomrule
\end{tabular}
\end{center}
\end{table}
The Arnoldi iteration has the lowest forward-pass and gradient error, but \cref{table-pde-results} demonstrates how all approaches lead to low errors on $\omega$ as well as on a test set (a held-back percentage of the training data); see also \cref{figure-pde-contours}.
The adjoints of the Arnoldi iteration match the efficiency of the differentiate-then-discretise adjoint \citep{chen2018neural}, and both outperform the discretise-then-differentiate adjoint by a margin.
This shows how linear-algebra solutions to matrix exponentials can compete with highly optimised differential equation solvers. We anticipate ample opportunities of using the now-differentiable Arnoldi iteration for physics-based machine learning.

\section{Case study: Calibrating Bayesian neural networks}
\label{section-case-study-bayesian-neural-networks}
Next, we differentiate a function of a matrix on a problem that is native to machine learning: marginal likelihood optimisation of a Bayesian neural network (a high-level introduction is in \Cref{appendix-laplace}).

\paragraph{Setup: Laplace-approximation of a VAN pre-trained on ImageNet}
We consider as $g_\theta(x)$ a ``Visual Attention Network'' \citep{guo2023visual} with 4,105,800 parameters, pre-trained on ImageNet \citep{deng2009imagenet}.
We assume $p(\theta) = N(0, \alpha^{-2}I)$, and Laplace-approximate the log-marginal likelihood of the data as
\begin{align}\label{equation-marginal-likelihood-laplace}
\log p(y \mid x) \approx \log p(y, \theta \mid x) - \frac{1}{2} \log\det(A(\alpha)) + \text{const}
\end{align}
where $A(\alpha)$ is the generalised Gauss--Newton matrix (GGN) from \Cref{section-introduction} (recall \Cref{equation-gauss-newton-matrix}).
\begin{wrapfigure}[14]{r}{0.30\linewidth}
\vspace{-1.0\baselineskip}
\centering
\includegraphics[width=\linewidth]{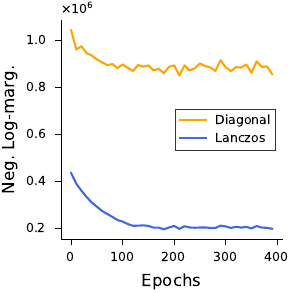}
\caption{Lanczos vs diagonal approx. for a Bayesian VAN.}
\label{figure-training-for-laplace}
\end{wrapfigure}
We optimise $\alpha$ via \Cref{equation-marginal-likelihood-laplace}, implementing the log-determinant via stochastic trace estimation in combination with a Lanczos iteration (like in \Cref{section-case-study-gaussian-processes}).
Contemporary works \citep{daxberger2021bayesian,kristiadi2020being,snoek2015scalable,denker1990transforming,ritter2018scalable,ritter2018online,maddox2020rethinking,sharma2021sketching} rely on sparse approximations of the GGN (such as diagonal or KFAC approximations), so we compare our implementation to a diagonal approximation of the GGN matrix, which yields closed-form log-determinants.
The exact diagonal of the 4-million-column GGN matrix would require 4 million GGN-vector products with unit vectors, and like \citet{deng2022accelerated}, we find this too expensive and resort to stochastic diagonal approximation (similar to trace estimation; all details are in \Cref{appendix-laplace}).
We give both the stochastic diagonal approximation and our Lanczos-based estimator exactly 150 matrix-vector products to approximate \cref{equation-marginal-likelihood-laplace}.
We compare the evolution of the loss function over time and
 various uncertainty calibration metrics.
\Cref{figure-training-for-laplace} demonstrates training and \Cref{table-laplace-results} shows results.
\begin{table}[t]
\caption{Lanczos outperforms the diagonal approximation for calibrating a Bayesian version of an ImageNet pre-trained VAN.
One training run; calibration estimated with 30 samples (sampling ``Lanczos'' and ``diagonal'' with another Lanczos/diagonal approximation; see \cref{appendix-laplace}).
\texttt{places365} \citep{zhou2014learning} is the out-of-distribution data; mean and std.-deviations of 3 runs. ``MVs'': matrix-vector products.
}
\label{table-laplace-results}
\begin{center}
\small
\begin{tabular}{lrrrr}
    \toprule
    	& 
    	&  Lanczos (50 MVs) 
    	&  Lanczos (150 MVs) 
    	& Diagonal (150 MVs)
    	\\
    \midrule
    Runtime (sec)
    	& $\downarrow$
    	& \cellcolor[gray]{0.9}\bf 950
    	& 4674
    	& 4314 
    	\\
    Marginal likelihood (log)
    	& $\uparrow$
    	& -192,757
    	&  \cellcolor[gray]{0.9} \bf -154,475
    	& -876,444 
    	\\
	\midrule
    Joint likelihood: train (log)
    	& $\uparrow$
    	& \cellcolor[gray]{0.9}\bf -81.2 $\pm$ 15.4 
    	&  \cellcolor[gray]{0.9}\bf  -81.2 $\pm$ 15.4 
    	& -5,669.2 $\pm$ 124.1 
    	\\
    Joint likelihood: test (log)
    	& $\uparrow$
    	& \cellcolor[gray]{0.9}\bf -66.5 $\pm$ 11.2 
    	& \cellcolor[gray]{0.9}\bf  -66.5 $\pm$ 11.2 
    	& -5,260.9 $\pm$ 290.2 
    	\\
    Expected calibration error
    	& $\downarrow$
    	& 0.5  $\pm$ 0.01
    	& 0.5  $\pm$ 0.01
    	& \cellcolor[gray]{0.9}\bf 0.2 $\pm$ 0.003 
    	\\
    AUROC (out-of-dist.)
    	& $\uparrow$
    	&\cellcolor[gray]{0.9}{\bf 0.9  $\pm$ 0.03}
    	& \cellcolor[gray]{0.9}{\bf  0.9  $\pm$ 0.03}
    	& 0.5 $\pm$ 0.010
    	\\
    \bottomrule
\end{tabular}
\end{center}
\end{table}

\textbf{Analysis: Lanczos uses matrix-vector products better (by a margin) }
The results suggest how, for a fixed matrix-vector-product budget, Lanczos achieves a drastically better likelihood at a similar computational budget and already shows significant improvement with a much smaller budget. 
Lanczos outperforms the diagonal approximation on all metrics except ECE.
The subpar performance of the diagonal approximation matches the observations of \citet{ritter2018scalable}; see also \citep{daxberger2021laplace}. 
The main takeaway from this study is that differentiable matrix-free linear algebra unlocks new techniques for Laplace approximations and allows further advances for Bayesian neural networks in general.

\newpage
\acksection{
This work was supported by a research grant (42062) from VILLUM FONDEN. The work was partly
funded by the Novo Nordisk Foundation through the Center for Basic Machine Learning Research in Life Science (NNF20OC0062606).
This project received funding from the European Research Council (ERC) under the European Union’s Horizon programme (grant agreement 101125993).
}

\medskip
\bibliography{bibfile}

\newpage

\appendix
\section*{Appendix: Overview}

Some of the results in the main paper promised detailed information about setups, data, compute, or additional proofs.
For example, the case study about partial differential equations involves a data generation process which will receive further explanation in this supplement.

The appendix provides the following details:
\Cref{appendix-section-mode-vjp-wall-times,appendix-section-accuracy-loss-hilbert} elaborate on \Cref{figure-wall-times-per-vjp,table-accuracy-loss-hilbert} respectively; 
\Cref{appendix-section-proof-of-corollary-parameter-gradients-arnoldi,appendix-section-proof-of-theorem-adjoint-system-of-lanczos,appendix-section-proof-of-theorem-adjoint-system-of-arnoldi,appendix-section-solving-the-adjoint-system-in-detail}
contain proofs for the main results; and \Cref{appendix-section-pde-data,appendix-section-low-memory-matrix-vector-products,appendix-experiment-details-gaussian-processes,appendix-laplace} describe the setup used for the case studies.
Notably, \Cref{appendix-section-low-memory-matrix-vector-products} describes how we implement low-memory matrix-vector products in JAX (to replicate what makes libraries like KeOps \citep{charlier2021kernel} so efficient), and \Cref{appendix-section-pde-data} outlines a PDE data set similar to that by \citet{liu2023wavebench}.

\paragraph{Code}
Most of the contributions of this paper pertain to differentiable implementations of numerical algorithms -- at the heart of it are our reverse-mode differentiable Lanczos and Arnoldi iterations.
We provide JAX code to reproduce all experiments at the URL
\begin{center}
\texttt{https://github.com/pnkraemer/experiments-lanczos-adjoints}
\end{center}
and have packaged all numerical methods in a JAX library that can be installed via
\begin{center}
\texttt{pip install matfree}
\end{center}
Next to Lanczos and Arnoldi, this includes variants of conjugate gradient methods \citep{hestenes1952methods} and pivoted Cholesky preconditioners \citep{harbrecht2012low} (which we used for Gaussian processes), low-memory kernel-matrix-vector products (also Gaussian processes), efficient GGN-vector products (used for Bayesian neural networks), and matrix-free sampling algorithms to sample from Gaussian processes and Laplace-approximated Bayesian neural networks (used for Gaussian processes and Bayesian neural networks).
These methods are known in some form or another, but until now, they have all lacked a software implementation in the current JAX ecosystem (with some exceptions relating to conjugate gradients).

\paragraph{Compute}
All experiments before the case studies were run on CPU. The Gaussian process and differential equation case studies run on a V100 GPU, the Bayesian neural network one on a P100 GPU. The Gaussian process and Bayesian neural network studies run in a few hours, all other code finishes in a few minutes.

\section{Additional context for \Cref{figure-wall-times-per-vjp}}
\label{appendix-section-mode-vjp-wall-times}

To create \Cref{figure-wall-times-per-vjp}, we load the ``bcsstk18'' matrix from the SuiteSparse matrix collection \citep{kolodziej2019suitesparse,davis2011university,duff1989sparse}.
This matrix is symmetric and has 11,948 rows/columns and 149,090 nonzero entries.
We implement matrix-vector products with \texttt{jax.experimental.sparse}, and use a Lanczos iteration without reorthogonalisation.
We time the execution of the forward pass, as well as the backward pass with and without implementing a custom vector-Jacobian product (the custom vector-Jacobian product involves the adjoint).
The results were shown in \Cref{figure-wall-times-per-vjp}, and displayed how rapidly the computational complexity of automatic differentiation increases, whereas the computational complexity of our custom gradient mirrors that of the forward pass.
This figure showed how without our proposed gradients, differentiating through the Lanczos iteration is unfeasible.

To add to this benchmark, we repeat the same for the Arnoldi iteration and show both compilation and runtime for Lanczos and Arnoldi in \Cref{figure-runtime-vjp-all} (this includes the curves from \Cref{figure-wall-times-per-vjp} again).
\begin{figure}[ht]
\caption{Run and compilation times for Lanczos and Arnoldi. The ``backprop'' curves were stopped at $100$, because for higher values we encountered memory issues. All experiments run on CPU.}
\label{figure-runtime-vjp-all}
\includegraphics[width=\linewidth]{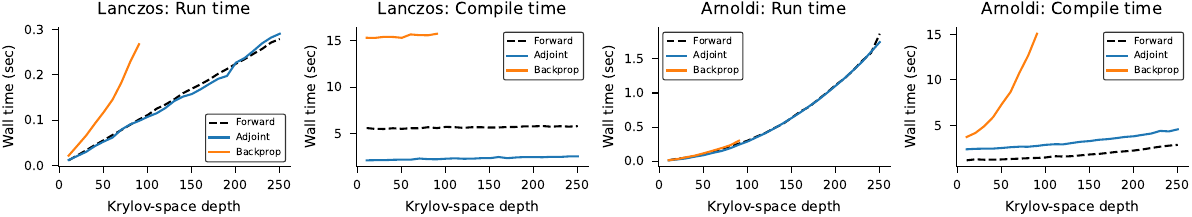}
\end{figure}
We see different behaviour for Lanczos and Arnoldi.
Whereas for back-propagation through Lanczos without the custom gradient, the compilation times remain constant for increasing Krylov-space depth $K$ and runtimes increase rapidly, the reverse is true for the Arnoldi iteration.
The adjoint method mirrors that of the forward pass in both benchmarks.
In either case, the increasing memory requirements for backpropagation through Lanczos and Arnoldi without our proposed adjoints becomes apparent.

\section{Proof of \cref{theorem-adjoint-system-of-arnoldi}}
\label{appendix-section-proof-of-theorem-adjoint-system-of-arnoldi}

Here is how we derive the adjoints of the Arnoldi system.
The structure is the usual: take the forward constraint, differentiate, add Lagrange multipliers (``transpose''), and identify the adjoint system.

\subsection{Linearisation}
The linearisation of \Cref{equation-arnoldi-constraint} is
\begin{subequations}
\begin{align}
(\diff A) Q + A \diff Q - (\diff Q) H - Q \diff H - (\diff r) (e_K)^\top
	&= 0 
		\quad \in \Rbb^{N \times K}, 
\\
(\diff Q) e_1 - (\diff v)c - v \diff c
	&= 0 
		\quad \in \Rbb^{N \times 1},
\\
I_\leq \circ [ \langle Q e_i, (\diff Q) e_j \rangle  + \langle Q e_j, (\diff Q) e_i \rangle ]_{i,j=1}^K 
	&= 0 
		\quad \in \Rbb^{K\times K}, \\
I_\ll \circ \diff H
	&= 0 
		\quad \in \Rbb^{K\times K},
\\
[\langle r, (\diff Q) e_j \rangle + \langle Q e_j, \diff r\rangle]_{j=1}^K
	&= 0 
		\quad \in \Rbb^{K \times 1}.
\end{align}
\end{subequations}
To see this, apply the chain- and product rules to the original constraint in \Cref{equation-arnoldi-constraint}.

\subsection{Transposition}

Let $\rho = \rho(Q, r, H, c) \in \Rbb$ be a scalar function of the outputs.
In the following, interpret vectors as $N \times 1$ matrices and scalars as $1 \times 1$ matrices.
The values of inner products and the realisations of $\rho$ are the only scalars.

For all $\Lambda \in \Rbb^{N \times K}$, 
$\lambda \in \Rbb^{N \times 1}$,
$\Gamma, \Sigma \in \Rbb^{K \times K}$,
and
$\gamma \in \Rbb^{K \times 1}$, we have
\begin{subequations}
\begin{align}
\diff \rho 
&= 
\langle
	\nabla_Q \rho,
	\diff Q
\rangle
+
\langle
	\nabla_H \rho,
	\diff H
\rangle
+
\langle
	\nabla_r \rho,
	\diff r
\rangle
+
\langle
	\nabla_c \rho,
	\diff c 
\rangle
\\
&= 
\langle
	\nabla_Q \rho,
	\diff Q
\rangle
+
\langle
	\nabla_H \rho,
	\diff H
\rangle
+
\langle
	\nabla_r \rho,
	\diff r
\rangle
+
\langle
	\nabla_c \rho,
	\diff c 
\rangle
\notag
\\
&~~~~~~~~~~+
\langle
	\Lambda,
	(\diff A) Q + A \diff Q - (\diff Q) H - Q \diff H - (\diff r) (e_K)^\top
\rangle
\notag
\\
&~~~~~~~~~~+
\langle
	\lambda,
	(\diff Q) e_1 - (\diff v) c - v \diff c
\rangle
\notag
\\
&~~~~~~~~~~+
\langle
	\Gamma,
	I_\leq \circ [\langle Q e_i, (\diff Q) e_j \rangle + \langle Q e_j, (\diff Q) e_i \rangle]_{i,j=1}^K
\rangle
\notag
\\
&~~~~~~~~~~+
\langle
	\Sigma,
	I_\ll \circ \diff H
\rangle
\notag
\\
&~~~~~~~~~~+
\langle
	\gamma,
	[\langle r, (\diff Q) e_j \rangle + \langle Q e_j, \diff r \rangle]_{j=1}^K
\rangle
\\
&= 
\langle
	\nabla_Q \rho,
	\diff Q
\rangle
+
\langle
	\nabla_H \rho,
	\diff H
\rangle
+
\langle
	\nabla_r \rho,
	\diff r
\rangle
+
\langle
	\nabla_c \rho,
	\diff c 
\rangle
\notag
\\
&~~~~~~~~~~+
\langle
	\Lambda Q^\top,
	\diff A
\rangle 
+ 
\langle 
	A^\top\Lambda,
	\diff Q 
\rangle
-
\langle
	\Lambda H^\top,
	\diff Q
\rangle
-
\langle
	 Q^\top\Lambda,
	\diff H
\rangle
- 
\langle
	\Lambda e_K,
	\diff r
\rangle
\notag
\\
&~~~~~~~~~~+
\langle
	\lambda (e_1)^\top,
	\diff Q 
\rangle
- 
\langle
	\lambda c^\top,
	\diff v
\rangle
- 
\langle
	v^\top\lambda ,
	\diff c
\rangle
\notag
\\
&~~~~~~~~~~+
\langle
	Q(I_\leq \circ \Gamma),
	\diff Q
\rangle
+
\langle
	Q (I_\leq \circ \Gamma)^\top,
	\diff Q
\rangle
\notag
\\
&~~~~~~~~~~+
\langle
	I_\ll \circ \Sigma,
	\diff H
\rangle
\notag
\\
&~~~~~~~~~~+
\langle
	r \gamma^\top,
	\diff Q
\rangle
+
\langle 
	Q \gamma,
	\diff r
\rangle
\\
&\eqqcolon
\langle 
	Z_Q, 
	\diff Q 
\rangle
+
\langle
	Z_H,
	\diff H
\rangle
+
\langle
	Z_r,
	\diff r
\rangle
+
\langle
	Z_c,
	\diff c
\rangle
+
\langle
	\Lambda Q^\top,
	\diff A
\rangle
+
\langle
	\lambda c^\top,
	\diff v
\rangle
\end{align}
\end{subequations}
with the constraints
\begin{subequations}
\begin{align}
Z_Q
&\coloneqq
\nabla_Q \rho + A^\top \Lambda - \Lambda H^\top + \lambda (e_1)^\top + Q(I_\leq \circ \Gamma) + Q(I_\leq \circ \Gamma)^\top + r \gamma^\top
&&\in \Rbb^{N \times K}
\\
Z_H
&\coloneqq
\nabla_H \rho - Q^\top\Lambda  + I_\ll \circ \Sigma
&&\in \Rbb^{K \times K}
\\
Z_r
&\coloneqq
\nabla_r \rho - \Lambda e_K + Q \gamma
&&\in \Rbb^{N \times 1}
\\
Z_c
&\coloneqq
\nabla_c \rho - v^\top\lambda 
&&\in \Rbb^{1 \times 1}.
\end{align}
\end{subequations}
Solving the adjoint system, $Z_Q = 0$, $Z_H = 0$, $Z_r = 0$, and $Z_c = 0$ as a function of $\Lambda, \lambda, \Gamma, \Sigma$, and $\gamma$, yields the desired $\nabla_A \rho = \Lambda Q^\top$ and $\nabla_v f = \lambda c^\top$.
\Cref{theorem-adjoint-system-of-arnoldi} is complete.

\section{Proof of \cref{theorem-adjoint-system-of-lanczos}}
\label{appendix-section-proof-of-theorem-adjoint-system-of-lanczos}

Recall that  $\rho = \rho(x_1, ..., x_{K+1}; a_1, ..., a_K; b_1, ..., b_K)$ shall be a scalar/loss that depends on the output of the algorithm. 
Denote by $\nabla_{x_k} \rho$ the gradient of $\rho$ with respect to each Lanczos vector $x_k$, and by $\nabla_{a_k} \rho$ and $\nabla_{b_k} \rho$ the gradients with respect to $a_k$ and $b_k$ respectively.

The differential of normalisation, i.e., the operation $s \mapsto h = s/(s^\top s)$ is
\begin{align}
\diff h = \frac{1}{s^\top s} \left( I - h h^\top \right) \diff s.
\end{align}
The next steps are the usual ones: we start with the forward constraint, linearise, add Lagrange multipliers, and identify the adjoint system.
The order of the middle two steps (linearise, multipliers) is interchangeable; while for Arnoldi, we linearise first and then add Lagrange multipliers, for Lanczos, we go the other way.

Define Lagrange multipliers $\{\lambda_k\}_{k=0}^{K} \subseteq \Rbb^n$ and $\{\mu_{k}/2, \nu_k\}_{j,k=1}^{K} \subseteq \Rbb$, %
\begin{align}
\rho 
	&= 
		\rho
		+ \sum_{k=1}^K \lambda_{k}^\top \left(-b_{k-1}x_{k-1} + (A - a_k I)x_k - b_k x_{k+1} \right) 
		\tag{$b_0 = 1, x_0 = 0$} 
		\\
	&~~~~~~~
		- \lambda_0^\top \left(x_1 - \frac{v}{\sqrt{v^\top v}} \right) 
		+ \frac{1}{2}\sum_{k=1}^{K} \mu_{k}(x_{k+1}^\top x_{k+1} - 1)
		+ \sum_{k=1}^{K} \nu_{k}\,x_{k}^\top x_{k+1}.
\end{align}
Differentiate and use that the forward constraint must be satisfied,
\begin{align}
\diff \rho 
	&=
		 \sum_{k=1}^{K+1} (\nabla_{x_k} \rho)^\top \diff x_{k} 
		+ \sum_{k=1}^K (\nabla_{a_k} \rho)^\top \diff a_k 
		+ \sum_{k=1}^K (\nabla_{b_k} \rho)^\top \diff b_k
		\notag
		\\
	&~~~
		+ \sum_{k=1}^K \lambda_{k}^\top 
		\left[ 
			(\diff A) x_k 
			- (\diff a_k)x_k 
			- (\diff b_{k-1}) x_{k-1} 
			- (\diff b_k) x_{k+1}
		\right]
		\tag{$\diff b_0 = 1, \diff x_0 = 0$} 
		\\
	&~~~
		+ \sum_{k=1}^K \lambda_{k}^\top 
		\left[ 
			A \diff x_k 
			- a_k \diff x_k 
			- b_{k-1}\diff x_{k-1} 
			- b_k \diff x_{k+1}
		\right] 
		\tag{$b_0 = 1, x_0 = 0$} 
		\\
	&~~~
		-\lambda_0^\top 
		(
			\diff x_1 
			- (I - x_1 x_1^\top) /(v^\top v)\diff v
		)
		\notag
		\\
	&~~~
		+\sum_{k=1}^{K}\mu_{k}\,x_{k+1}^\top \diff x_{k+1}
		\notag
		\\
	&~~~
		+\sum_{k=1}^{K} \nu_{k}(
			x_{k+1}^\top \diff x_{k} 
			+ x_{k}^\top \diff x_{k+1}
		).
\end{align}
Sort all terms by differential,
\begin{align}
\diff \rho 
	&=
	\sum_{k=1}^K Z_{a_k} \diff a_k 
	+ \sum_{k=1}^K Z_{b_k} \diff b_k 
	+ \sum_{k=1}^{K+1} Z_{x_k} \diff x_k
	+ Z_v\diff v
	+ \trace{Z_A \diff A }.
\end{align}
so that enforcing that all $Z_{a_k}$, $Z_{b_k}$, $Z_{x_k}$ terms are zero yields constraints for the multipliers from which we can compute gradients with respect to $v$ and $A$.

What are those terms? Let $\lambda_{K+1} = 0$, $\mu_0=0$, and $\nu_0=0$ (to simplify notation below); then,
\begin{align}
Z_{x_{K+1}} 
	&= 
		- \lambda_{K} b_K 
		+ (
			\nabla_{x_{K+1}} \rho 
			+  \mu_K x_{K+1} 
			+ \nu_{K} x_K
		),  
\end{align}
and for all  $k=K, ...1$, {\hfill ({recall}~$x_0=0, b_0=1, \mu_0=0, \nu_0=0, \lambda_{K+1}=0$)}
\begin{subequations}
\begin{align}
Z_{x_k} 
	&= 
		- b_k \lambda_{k+1} 
		+ (A^\top - a_k I)\lambda_{k} 
		- b_{k-1} \lambda_{k-1} 
		+ (
			\nabla_{x_k} \rho 
			+ \mu_{k-1} x_{k} 
			+ \nu_k x_{k+1}
			+ \nu_{k-1} x_{k-1}
		),  
	\\
Z_{a_k} 
	&= 
		\nabla_{a_k} \rho 
		- \lambda_{k}^\top x_k,
		\\
Z_{b_k} 
	&= 
		\nabla_{b_k} \rho 
		- \lambda_{k+1}^\top x_k 
		- \lambda_{k}^\top x_{k+1}.
\end{align}
\end{subequations}
The expressions are like the forward-Lanczos constraints, and the main differences are (i) that the recursions are run backwards in ``time'' and (ii) the existence of a nonzero bias term in the adjoints (marked by parentheses).
Enforcing $Z_{a_k}$, $Z_{b_k}$, $Z_{x_k}$ to be zero identifies 
\begin{align}
\nabla_v \rho = Z_v \coloneqq \left( \frac{\lambda_0^\top x_1}{v^\top v} x_1^\top- \lambda_0^\top \right) 
,
\quad
\nabla_A \rho = Z_A \coloneqq\left[\sum_{k=1}^K x_k \lambda_{k}^\top\right].
\end{align}
\Cref{theorem-adjoint-system-of-lanczos} is complete.

\section{Proof of \Cref{corollary-parameter-gradients-arnoldi}}
\label{appendix-section-proof-of-corollary-parameter-gradients-arnoldi}

The following derivation covers only the case for Lanczos, i.e., we use the variables $x_1, ..., x_{K+1}$ instead of Arnoldi's $q_1, ..., q_K$.
But the derivation is the same for both methods.

The expression we manipulate is
\begin{align}
\diff \rho = \trace{\left[\sum_{k=1}^K x_k \lambda_{k}^\top\right] \diff A} + \text{const}
\end{align}
where all non-$\diff A$-related quantities are treated as some unimportant constants.

For a single parameter $\theta_j$, $\diff A = (\nabla_{\theta_j} A)^\top \diff \theta_j$ and we have
\begin{align}
\trace{\left[\sum_{k=1}^K x_k \lambda_{k}^\top\right] D_j A \diff \theta_j} 
	&= \trace{\left[\sum_{k=1}^K x_k \lambda_{k}^\top\right] (\nabla_{\theta_j} A)^\top }\diff \theta_j  \tag{chain rule}
	\\
	&= \trace{\sum_{k=1}^K \lambda_{k}^\top (\nabla_{\theta_j} A)^\top x_k} \diff \theta_j \tag{cyclic property of traces}
	\\
	&= \sum_{k=1}^K \lambda_{k}^\top(\nabla_{\theta_j} A)^\top x_k \diff \theta_j \tag{a scalar is its own trace}
	\\
	&= \nabla_{\theta_j} \left[\sum_{k=1}^K \lambda_{k}^\top A^\top x_k\right] \diff \theta_j \tag{linearity of diff. \& summation} \\
	&= \nabla_{\theta_j} \left[\sum_{k=1}^K \lambda_{k}^\top A x_k\right] \diff \theta_j. \tag{symmetry of $A$}
\end{align}
In conclusion, the derivative of $\rho$ wrt $\theta_j$ is
\begin{align}
\nabla_{\theta_j} \rho =  \nabla_j \left[\sum_{k=1}^K  x_k^\top A^\top \lambda_{k}\right]
\end{align}
and stacking all of those partial derivatives on top of each other, we obtain
\begin{align}
\nabla_{\theta} \rho 
=
[\nabla_{\theta_j} \rho]_j
=
\nabla \left[ \theta \mapsto  \sum_{k=1}^K  x_k^\top A(\theta)^\top \lambda_{k}\right] 
=
\sum_{k=1}^K \nabla \left[ \theta \mapsto    A(\theta)^\top \lambda_{k}\right]  x_k.
\end{align}
We already compute $A(\theta)^\top \lambda_{k}$ during the backward pass, so we are a single vector-Jacobian product with $x_k$ away from a matrix-parameter-gradient instead of a matrix-gradient.
This requires $O(p+n)$ storage, and is computed online, which makes the memory-complexity independent of $K$.

\section{Solving the adjoint system}
\label{appendix-section-solving-the-adjoint-system-in-detail}

The upcoming section details how to solve the adjoint system for both, the Lanczos and the Arnoldi iterations.
It reuses notation from \Cref{appendix-section-proof-of-theorem-adjoint-system-of-lanczos,appendix-section-proof-of-theorem-adjoint-system-of-arnoldi}.

We begin with Lanczos, because the solution is less technical, and because starting with Lanczos can provide a template for solving Arnoldi's adjoint system.
All results in the present section (except for those that explicitly point to \citet{deuflhard2003three,deuflhard1976algorithms}, which are marked as such) are new and a contribution of this work.

\subsection{Lanczos}

The inputs to the adjoint system are the Lanczos vectors $\{x_k\}_{k=1}^{K+1}$ and the coefficients $\{a_k\}_{k=1}^K$ as well as $\{a_k\}_{k=1}^K$ from the forward pass, the corresponding input derivatives $\{\nabla_{x_k} \rho\}_{k=1}^{K+1}$, $\{\nabla_{a_k} \rho\}_{k=1}^K$, and $\{\nabla_{a_k} \rho\}_{k=1}^K$, and matrix $A$ and initial vector $v$.

The overall strategy for solving the adjoint system of the Lanczos iteration (\Cref{theorem-adjoint-system-of-lanczos}) is the following: for every $k=K, ..., 1$, alternate the two steps:
\begin{enumerate}
	\item Combine orthogonality with the $Z_{a_k}$ constraints to get $\nu_k$, and combine it with the $Z_{b_k}$ constraints to get $\mu_k$.
	\item Once each $\mu_k$ and $\nu_k$ are available, solve for $\lambda_k$ and repeat with the next lower $k$.
\end{enumerate}
This results in the following procedure: 
To start, set $\zeta_{K+1} = -(\nabla_{x_{K+1}} \rho)$ and $\lambda_{K+1} = 0$.
Then, for all $k=K, ..., 1$, compute
\begin{subequations}
\begin{align}
\xi_{k} &= \zeta_{k+1} / b_k \\
\tilde{\mu}_k &= \nabla_{b_k} \rho - \lambda_{k+1}^\top x_k + x_{k+1}^\top \xi_{k} \\
\tilde{\nu}_k &= \nabla_{a_k} \rho + x_k^\top \xi_{k} \\
\lambda_k &= -\xi_k + \tilde{\mu}_k \cdot x_{k+1} + \tilde{\nu}_k \cdot x_k \\
\zeta_k &= -\nabla_{x_k} \rho - A^\top \lambda_k + a_k \cdot \lambda_k + b_k \cdot \lambda_{k+1} - b_k \cdot \tilde{\nu}_k \cdot x_{k+1} \\
\text{Repeat with } k &= k-1.
\end{align}
\end{subequations}
Finally, set $\lambda_0 = \zeta_1$.
Only $\lambda$ and $\zeta$ affect subsequent steps; $\mu$, $\nu$, and $\xi$ are only needed for computing $\lambda$ and $\zeta$.
The $k$-th step depends on $a_k, b_k, x_k, x_{k+1}, \nabla_{a_k} \rho, \nabla_{b_k} \rho, \nabla_{x_k} \rho$.

The strategy above yields all $\{\lambda_k\}_{k=0}^K$.
Finalise the gradients
\begin{align}
\nabla_v \rho = \frac{\lambda_0^\top x_1}{v^\top v} x_1- \lambda_0, 
\quad 
\nabla_A \rho = \sum_{k=1}^K \lambda_{k} x_k^\top
\end{align}
which can be embedded into any reverse-mode algorithmic-differentiation-engine.

\subsection{Arnoldi}

The process for Arnoldi is similar to that for Lanczos, but the derivation is more technical.
It shares many similarities with deriving the Arnoldi iteration (i.e., the forward pass), so we begin by providing a perspective on recurrence relations (which include the Arnoldi iteration) through the lens of linear system solvers \citep{deuflhard2003three,deuflhard1976algorithms} before we use this perspective to solve the adjoint system.

\subsubsection{Solving the original system}

At its core, finding the Arnoldi vectors amounts to solving
\begin{subequations}
\begin{align}
- AQ + QH + r (e_K)^\top &= 0,
\\
Q e_1 -  c v &= 0,
\end{align}
\end{subequations}
which is possible in closed form as follows:
We rewrite the first two constraints as 
\begin{subequations}
\begin{align}
(e_1 \otimes I) \vect{Q} &= \vect{cv}
\\
- (I \otimes A) \vect{Q} + (H^\top \otimes I) \vect{Q} + (e_K \otimes I) \vect{r} &= 0.
\end{align}
\end{subequations}
This expression is equivalent to
\begin{align}\label{equation-arnoldi-as-linear-system}
\begin{pmatrix}
e_1 \otimes I & 0 \\
H^\top \otimes I - I \otimes A & e_K \otimes I
\end{pmatrix}
\begin{pmatrix}
\vect{Q} \\
\vect{r}
\end{pmatrix}
=
\begin{pmatrix}
\vect{cv} \\
0
\end{pmatrix}
\in \Rbb^{N(K+1) \times 1}
\end{align}
This is similar to the work by \citet{deuflhard1976algorithms}, who explain adjoints of three-term recurrence relations.
Since $c^2 = \langle v, v \rangle$ holds (ie, $c$ is known), the first row of $Q$ is known.
Then, the first row of $Q$ together with the orthogonality constraints yields the first row of $H^\top$, which then defines the next row of the linear system thus the second row of $Q$.
Alternating between deriving the next row of $H^\top$ and solving the lower triangular systems is then Arnoldi's algorithm:
\begin{algorithm}[Arnoldi's forward pass; paraphrased]
Assume that $v$ and $K$ are known. Compute $c = \sqrt{\langle v, v \rangle}$.
Then, for $k=1, ..., K$, alternate the following tsteps:
\begin{enumerate}
	\item Derive the next column of $H$ using the orthogonality constraints.
	\item Forward-substitute (``solve'') the block-lower-triangular system for the next column of $Q$ (respectively $r$ at the last iteration).
\end{enumerate}
Return all $Q$ and $H$, as well as $c$ and $r$.
\end{algorithm}
The same principle applies to the adjoint system, and the only difference is that the notation is slightly more complicated:

\subsubsection{Solving the adjoint system}
The constraints $Z_Q=0$ and $Z_r = 0$ mirror those of $AQ - QH - r (e_K)^\top = 0$ and $Q e_1 - c v = 0$; the constraints $Z_H = 0$ and $Z_c = 0$ mirror the orthogonality constraints $Q^\top Q = I$ and $Q^\top r = 0$.
Therefore, we start with $Z_Q = 0$ and $Z_r = 0$,
\begin{subequations}
\begin{align}
\nabla_Q f + A^\top \Lambda - \Lambda H^\top + \lambda (e_1)^\top + Q(I_\leq \circ \Gamma) + Q(I_\leq \circ \Gamma)^\top + r \gamma^\top &= 0
\\
\nabla_r f - \Lambda e_K + Q \gamma &= 0.
\end{align}
\end{subequations}
Introduce the auxiliary quantities
\begin{subequations}
\begin{align}
\Psi(\Gamma, \gamma) &\coloneqq \nabla_Q f + Q(I_\leq \circ \Gamma) + Q(I_\leq \circ \Gamma)^\top + r \gamma^\top &&\in \Rbb^{N \times K}
\\
\psi(\gamma) &\coloneqq \nabla_r f + Q \gamma &&\in \Rbb^{N \times 1}.
\end{align}
\end{subequations}
$\Psi$ and $\psi$ only serve the purpose of simplifying the notation in the coming part; there is no ``meaning'' associated with them.
Vectorise both expressions,
\begin{subequations}
\begin{align}
 (I \otimes A^\top) \vect{\Lambda} - [(H^\top)^\top \otimes I] \vect{\Lambda} + (e_1 \otimes I) \vect{\lambda} &= -\vect{\Psi(\Gamma, \gamma)} \\
 [(e_K)^\top \otimes I] \vect{\Lambda} &= \vect{\psi(\gamma)}
\end{align}
\end{subequations}
and observe that this can be written as a linear system 
\begin{align}
\begin{pmatrix}
e_1 \otimes I & I \otimes A^\top - (H^\top)^\top \otimes I \\
0 & (e_K)^\top \otimes I 
\end{pmatrix}
\begin{pmatrix}
\vect{\lambda} \\
\vect{\Lambda}
\end{pmatrix}
=
\begin{pmatrix}
-\vect{\Psi(\Gamma, \gamma)} \\
\vect{\psi(\gamma)}
\end{pmatrix}
\end{align}
with a system matrix that is the transpose of the system matrix of the forward pass.
The matrix is upper triangular, and the equation can be solved with backward substitution provided $\psi(\gamma)$ and $\Psi(\Gamma, \gamma)$ are known.

The defining quantities $\Psi$ and $\psi$ emerge by combining the adjoint recursion with the projection constraints $Z_H = 0$ (for $\Gamma$, which yields $\Psi$) and $Z_r = 0$ (for $\gamma$, which yields $\psi$). We use $Z_c = 0$ to get a single element in $\Gamma$; more on this below.
Summarise the adjoint pass:
\begin{algorithm}[Arnoldi's adjoint pass; paraphrased]
Assume $Q$, $H$, $c$, and $r$ as well as the gradients of $f$ with respect to those quantities.
Then, compute $\psi$ via computing $\gamma$ using $Z_r = 0$.
Then, for $k=K, ..., 1$, alternate the following two steps:
\begin{enumerate}
	\item Derive the next row of $\Psi$ by combining $Z_Q = 0$ with the projection constraint $Z_H = 0$
	\item Backward-substitute (``solve'') for the next row of $\Lambda$ (recall: we loop backwards)
\end{enumerate}
Finally, use $Z_c = 0$ to get the first row of $\Psi$ and solve for $\lambda$.
Then, return $\nabla_A f = \Lambda Q^\top$ and $\nabla_v f = \lambda c^\top$.
\end{algorithm}
The structure of the adjoint pass is similar to the forward pass (\Cref{table-forward-versus-adjoint-pass}).
\begin{table}[t]
\caption{Forward versus adjoint (backward) pass}
\label{table-forward-versus-adjoint-pass}
\begin{center}
\footnotesize
\begin{tabular}{ r  l l  l l }
\toprule
 & System matrix & Solve via & Recursively define & Using \\
\midrule
Forward & Lower triangular & Forward substitution & System matrix & Orthogonality\\
Adjoint & Upper triangular & Backward substitution & Right-hand side & Projection: $Z_H = 0$\\
\bottomrule
\end{tabular}
\end{center}
\end{table}
In the following, we will elaborate on each of those steps.
We assume that the reader knows how to solve a lower triangular linear system. 
We focus on constructing $\psi$ and $\Psi$ via $\Gamma$ and $\gamma$.

\subsubsection{Initialisation}
Initialisation of the adjoint pass implies computing $\psi$.
To get $\psi$, we need $\gamma$:
Consider multiplying $Z_r$ with $Q^\top$,
\begin{align}
0
= Q^\top Z_r
&= Q^\top \nabla_r f -Q^\top \Lambda e_K + \gamma
\tag{use the definition of $Z_r$}
\\
&= Q^\top \nabla_r f - \left( \nabla_H f + I_\ll \circ \Sigma\right)e_K + \gamma
\tag{since $Z_H = 0$}
\\
&=
(Q^\top \nabla_r f - \nabla_H f e_K) + (I_\ll \circ \Sigma) e_K + \gamma
\tag{reorder}
\\
&=
(Q^\top \nabla_r f - \nabla_H f e_K) + \gamma
\tag{($I_\ll \circ \Sigma) e_K = 0$}
\end{align}
where we use that the last column in $I_\ll \circ \Sigma$ consists entirely of zeros.
All other quantities are known.
Therefore, $\gamma$ is isolated and we identify
\begin{align}
\gamma = \nabla_H f e_K - Q^\top \nabla_r f. 
\end{align}
Next, use this $\gamma$ to build $\psi$,
\begin{align}
\psi(\gamma) = \nabla_r f + Q \gamma
\end{align}
and the initialisation step is complete.

\subsubsection{Recursion}
With $\psi$ in place, we get the last column of $\Lambda$.
To get the next column of $\Lambda$, we need to derive the right-hand side $\Psi$.
To get $\Psi$, we need $\Gamma$.

Multiply $Q^\top Z_Q$ to obtain
\begin{align}
0 
&= Q^\top Z_Q 
\\
&= 
Q^\top \nabla_Q f + Q^\top A^\top \Lambda - Q^\top \Lambda H^\top + Q^\top \lambda (e_1)^\top + I_\leq \circ \Gamma + (I_\leq \circ \Gamma)^\top
\tag{since $Q^\top Q = I$ and $Q^\top r = 0$}
\\
&=
Q^\top \nabla_Q f 
+ Q^\top A^\top \Lambda 
- (\nabla_H f + I_\ll \circ \Sigma) H^\top 
+ Q^\top \lambda (e_1)^\top 
+ I_\leq \circ \Gamma 
+ (I_\leq \circ \Gamma)^\top
\tag{since $Z_H = 0$}
\\
&=
(Q^\top \nabla_Q f - \nabla_H f H^\top)
+ Q^\top A^\top \Lambda
+ Q^\top \lambda (e_1)^\top
+ I_\leq \circ \Gamma
+ (I_\leq \circ \Gamma)^\top
+ (I_\ll \circ \Sigma)H^\top
\tag{reorder the terms}.
\end{align}
Since $H$ is Hessenberg, $(I_\ll \circ \Sigma) H^\top$ is strictly lower triangular.
Therefore, multiplication with $I_\geq$ removes $\Sigma$ from the expression,
\begin{subequations}
\begin{align}
0
&=
I_\geq \circ \left[Q^\top \nabla_Q f - \nabla_H f  H^\top + Q^\top A^\top \Lambda + Q^\top \lambda (e_1)^\top + I_\leq \circ \Gamma + (I_\leq \circ \Gamma)^\top \right]
\\
&=
I_\geq \circ \left[Q^\top \nabla_Q f - \nabla_H f  H^\top  + Q^\top A^\top \Lambda\right] + I_\geq \circ [Q^\top \lambda (e_1)^\top] + I_= \circ \Gamma + I_\geq \circ \Gamma^\top.
\end{align}
\end{subequations}
The term involving $\lambda$ can be simplified as follows:
Due to the presence of $(e_1)^\top$, we know that $Q^\top \lambda (e_1)^\top$ is lower triangular (in fact, it has a single nonzero column).
Thus, $I_\geq \circ (Q^\top \lambda (e_1)^\top)$ is proportional to $e_1 (e_1)^\top$,
\begin{align}
I_\geq \circ (Q^\top \lambda (e_1)^\top) 
&= 
[(Q e_1)^\top \lambda] \, e_1 (e_1)^\top
\\
&= 
c v^\top \lambda \, e_1 (e_1)^\top
\tag{since $Q e_1 = c v$}
\\
&=
c \,\nabla_c f \, e_1 (e_1)^\top
\tag{since $Z_c = 0$}
\end{align}
and all quantities are known; hence,
\begin{align}
I_= \circ \Gamma + I_\geq \circ \Gamma^\top
=
-I_\geq \circ [Q^\top \nabla_Q f - \nabla_H f H^\top + Q^\top A^\top \Lambda]
- c \,\nabla_c f \, e_1 (e_1)^\top
\end{align}
must hold.

Now, the most important observation is the following:
the last column of $I_= \circ \Gamma + I_\geq \circ \Gamma^\top$ depends on the last column of $Q^\top A^\top \Lambda$ and known quantities;
the penultimate column depends on the penultimate column of $\Gamma$, and so on.
But at the time of assembling the last column of $\Gamma$, the last column of $\Lambda$ is known!
More generally, we always know one more column of $\Lambda$ than of $\Gamma$, so we can recursively assemble $I_= \circ \Gamma + I_\geq \circ \Gamma^\top$:

Let 
\begin{align}
\symmetric{M} \coloneqq I_\geq \circ M + (I_> \circ M)^\top
\end{align}
be a symmetrisation operator.
We define it for the sole purpose of reconstructing
\begin{align}
\symmetric{I_= \circ \Gamma + I_\geq \circ \Gamma^\top} = I_\leq \circ \Gamma + (I_\leq \circ  \Gamma)^\top.
\end{align} 
Let us use it:
\begin{subequations}
\begin{align}
I_\leq \circ \Gamma + (I_\leq \circ  \Gamma)^\top
&=
\symmetric{
-I_\geq \circ [Q^\top \nabla_Q f - \nabla_H f H^\top + Q^\top A^\top \Lambda]
- c \,\nabla_c f \, e_1 (e_1)^\top
}
\\
&=
\symmetric{
-I_\geq \circ [Q^\top \nabla_Q f - \nabla_H f H^\top + Q^\top A^\top \Lambda]
}
- c \,\nabla_c f \, e_1 (e_1)^\top
.
\end{align}
\end{subequations}
This yields the next row/column of $\Gamma + \Gamma^\top$, and therefore the next row of $\Psi$.
From there, we can assemble the next column of $\Lambda$ and iterate.
\cref{figure-algorithm-pseudocode} (respectively \cref{algorithm-forward-pass,algorithm-backward-pass}) compare pseudocode for forward and adjoint passes.
\begin{figure}[t]
\fcolorbox{white}{brown!11!white}{%
	\begin{minipage}[c]{0.45\linewidth}
	\begin{algorithm}[Forward pass]
	\label{algorithm-forward-pass}
	Initialise $k=1$, $Q e_1$. Then, for $k=1, ..., K$:
	\begin{enumerate}
		\item Use orthogonality for a new row in the system matrix in \cref{equation-arnoldi-as-linear-system}.
		\item Solve for the next column of $Q$
		\item Optional: re-enforce $Q^\top Q = I$.
	\end{enumerate}
	Solve for $r$ and return $Q, H, r, c$.
	\end{algorithm}
	\end{minipage}
}
\hfill
\fcolorbox{white}{blue!7!white}{%
	\begin{minipage}[c]{0.45\linewidth}
	\begin{algorithm}[Backward pass]
	\label{algorithm-backward-pass}
	Initialise $k=K$, $\Lambda e_1$. Then, for $k=K, ..., 1$:
	\begin{enumerate}
		\item Use projection for a new column of the right-hand side $\Psi$
		\item Solve for the next column of $\Lambda$
		\item Optional: re-enforce $Z_H = 0$.
	\end{enumerate}
	Solve for $\lambda$ and return $\nabla_\theta \rho$ and $\nabla_v \rho$.
	\end{algorithm}
	\end{minipage}
}
\caption{Forward and backward pass of the Arnoldi iteration (paraphrased)}
\label{figure-algorithm-pseudocode}
\end{figure}
Altogether, the implementation of the adjoint pass is very similar to that of the forward pass.

At the final step, we obtain not the last column of $\Lambda$ but $\lambda$, though this is a byproduct of solving the triangular linear system.
It does not need further explanation.

\begin{remark}[$\Sigma$]
Like for gradients of QR decompositions \citep{roberts2020qr,walter2010algorithmic}, we never solve for $\Sigma$.
\end{remark}

\section{Setup for \Cref{table-accuracy-loss-hilbert}}
\label{appendix-section-accuracy-loss-hilbert}

\begin{wrapfigure}{r}{0.4\linewidth}
\includegraphics{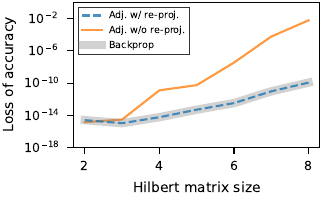}
\caption{Accuracy loss $\varepsilon$ when differentiating $\Ical$ for a Hilbert matrix of increasing size $N$. Uses double precision.}
\label{figure-accuracy-loss-all}
\end{wrapfigure}
To create \Cref{table-accuracy-loss-hilbert}, we implement an operator
\begin{align}
\Ical: A \mapsto (H, Q, r, c) \mapsto Q H Q^\top
\end{align}
where $(H, Q, r, c)$ are the result of a full-rank Arnoldi iteration (i.e. $K=N$).
For $K=N$, $Q H Q^\top=A$ and $\Ical$ must have an identity Jacobian; thus,
\begin{align}
\varepsilon \coloneqq \|I_{N^2} - \Ical\|_\text{RMSE}
\end{align}
measures the loss of accuracy when differentiating the Arnoldi iteration.
A small $\varepsilon$ is desirable.

Then, using double-precision, we construct a Hilbert matrix
\mbox{$A = \left[{1} / ({i + j + 1})\right]_{i,j=1}^N \in \Rbb^{N \times N}$}
which is a famously ill-conditioned matrix and a common test-bed for the loss of orthogonality in methods like the Lanczos and Arnoldi iteration \citep[e.g.][Table 7.1]{borm2012numerical}.
We evaluate three algorithms, all of which rely on the Arnoldi iteration with full reorthogonalisation on the forward-pass:
One algorithm does not re-project on the adjoint constraints, another one does, and for reference we compute $\varepsilon$ when ``backpropagating through'' the re-orthogonalised Arnoldi iteration as a third option.
\Cref{figure-wall-times-per-vjp} has demonstrated that the first two options beat the third one in terms of speed, but we consider numerical accuracy here.

We evaluate $\varepsilon$ for $N=1, ..., 8$ (see \Cref{figure-accuracy-loss-all}), and show the values for $N=8$ in \Cref{table-accuracy-loss-hilbert}.
The numerical accuracy of the re-projected adjoint method matches that of differentiating ``through'' re-orthogonalisation, and outperforms not re-projecting by a margin.

\section{Memory-efficient kernel-matrix-vector products in JAX}
\label{appendix-section-low-memory-matrix-vector-products}

Matrix-free linear algebra requires efficient matrix-vector products.
For kernel function $k=k(x,x')$, and input data $x_1, ..., x_N$, Gaussian process covariance matrices are of the form $A =[k(x_i, x_j)]_{i,j=1}^N$.
Matrix-vector products with $A$ thus look like
\begin{align}
v \mapsto A v= \left[ \sum_{j=1}^N k(x_i, x_j) v_j\right]_{i=1}^N
\end{align}
and can be assembled row-wise, either sequentially or parallely.

The more rows we assemble in parallel, the faster the runtime but also the higher the memory requirements, so we follow \citet{gardner2018gpytorch} and choose the largest number of rows of $A$ that still fit into memory, say $r$ such rows, and assemble $Av$ in blocks of $r$.
In practice, we implement this in JAX by combining $\texttt{jax.lax.map}$ and $\texttt{jax.vmap}$, but care has to be taken with reverse-mode automatic differentiation through $(v, \theta) \mapsto A(\theta) v$ because by default, reverse-mode differentiation stores all intermediate results.
To solve this problem, we place checkpoints around each such batch of rows, which reduces the memory requirements but roughly doubles the runtime.
(We place another checkpoint around each stochastic trace-estimation sample, which roughly doubles the runtime again.)

An alternative to doing this manually is the KeOps library \citep{charlier2021kernel}, which GPyTorch \citep{gardner2018gpytorch} builds on.
However, there currently exists no JAX-compatible interface to KeOps which is why we have to implement the above solution.

\Cref{figure-us-versus-keops} compares the runtime of our approach to that of KeOps custom CUDA code.
\begin{figure}[t]
\centering
\includegraphics[width=0.8\linewidth]{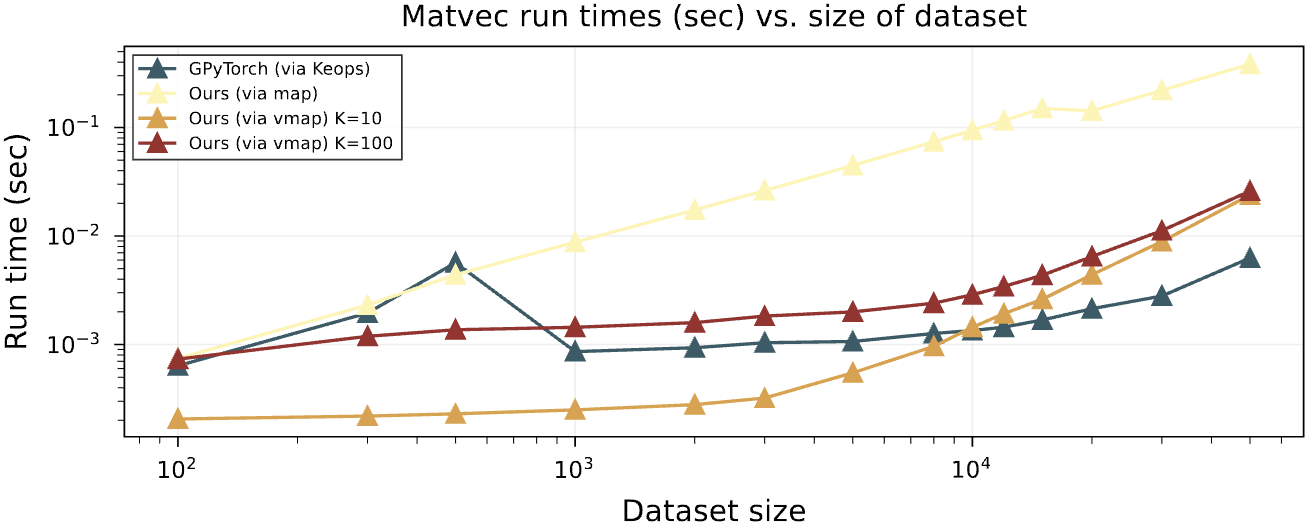}
\caption{For matrices with at least 10,000 rows/columns, KeOps remains the state of the art. This experiment uses a square-exponential kernel, on an artificial dataset with $d=3$ dimensions.}
\label{figure-us-versus-keops}
\end{figure}
We see that we are competitive, but roughly 5$\times$ slower for medium to large datasets.
Multiplying this with the 4$\times$ increase due to the checkpoints discussed above explains the 20$\times$ increase in runtime compared to GPyTorch.
Being 20$\times$ slower than GPyTorch per epoch is only due to the matrix-vector products, and has nothing to do with the algorithm contribution.
Future work should explore closing this gap with a KeOps-to-JAX interface.

\section{Experiment configurations for the Gaussian process study}
\label{appendix-experiment-details-gaussian-processes}

\paragraph{Data}
For the experiments we use the ``Protein'', ``KEGG (undirected'', ``KEGG (directed)'', ``Elevators'', and ``Kin40k'' datasets (\Cref{table-dataset-sources}, adapted from \citet{bartels2023adaptive}).
\begin{table}[ht]
\caption{Datasets used in this study.}
\label{table-dataset-sources}
\begin{center}
\begin{tabular}{llll}
\toprule
Dataset &  Source \\
\midrule
Protein &  Available here.\tablefootnote{%
\scriptsize Link: \texttt{http://archive.ics.uci.edu/dataset/265/physicochemical+properties+of+protein+tertiary+structure}} \\
Elevators & \citet{camachol1998inducing} \\
Kin40K &  \citet{schwaighofer2002transductive} \\
KEGG (undir) &  \citet{shannon2003cytoscape}  \\
KEGG (dir) &  \citet{shannon2003cytoscape} \\
\bottomrule
\end{tabular}
\end{center}
\end{table}
All are part of the UCI data repository, and accessible through there.

The data is subsampled to admit the train/test split of 80/20\%, and to admit an even division into the number of row partitions. 
More specifically, we use $10$ partitions for the kernel-matrix vector products.
This way, we have to discard less than $1\%$ of the data; e.g., on KEGG (undir), we use 63,600 instead of the original 63,608 points.

We calibrate a Mat\'ern prior with smoothness $\nu=1.5$, using 10 matrix-vector products per Lanczos iteration, conjugate gradients tolerance of $\epsilon = 1$, a rank-15 pivoted Cholesky preconditioner, and $10$ Rademacher samples.
We evaluate all samples sequentially (rematerialising on the backward pass to save memory, as discussed in \Cref{appendix-section-low-memory-matrix-vector-products}).
The conjugate-gradients tolerances are taken to be absolute (instead of relative), and the parametrisations of the Gaussian process models and loss functions match that of GPyTorch.

For every model, we calibrate an independent lengthscale for each input dimension, as well as an scalar observation noise, scalar output-scale, and the value of a constant prior mean.
All parameters are initialised randomly.
We use the Adam optimiser with learning rate 0.05 for 75 epochs.
All experiments are repeated for three different seeds.

\section{Partial differential equation data}
\label{appendix-section-pde-data}

We generate data for the differential equations as follows:
Recall the problem setup of a partial differential equation
\begin{align}
\frac{\partial^2}{\partial^2 t} u(t; x_1, x_2) = \omega(x_1, x_2)^2 \left( \frac{\partial^2}{\partial x_1^2} u(t; x_1, x_2) + \frac{\partial^2}{\partial x_2^2}  u(t; x_1, x_2)\right)
\end{align}
with Neumann boundary conditions. The coefficient field $\omega$ is space- but not time-dependent.

First, we discretise the Laplacian operator with central differences on an equidistant, tensor-product mesh that consists of $128$ points per dimension, which yields $128^2$ grid points.
The resulting second-order ordinary differential equation
\begin{align}
\frac{\diff^2}{\diff t^2} w = \omega^2 M w,
\end{align}
where $M$ is the discretised Laplacian,
is then transformed into a first-order differential equation 
\begin{align}
\frac{\diff}{\diff t} \begin{pmatrix} w \\\dot w \end{pmatrix} = \begin{pmatrix} 0 & I \\ \omega^2 M & 0\end{pmatrix} w \eqqcolon A w.
\end{align}
This equation is solved by the matrix exponential, and the system matrix $A$ is asymmtric (by construction), and highly sparse because $M$ is.
Matrix-vector products with $A$ are cheap, because we can implement them with \texttt{jax.scipy.signal.convolve2d}.

Then, we sample a true $\omega$ from a Gaussian process with a square-exponential covariance kernel, using lengthscale $\texttt{softplus}(-0.75)$ and output-scale $\texttt{softplus}(-10)$.
We sample from this process with the Lanczos algorithm \citep{pleiss2018constant} using Krylov-depth $K=32$.

Then, we use another Gaussian process with the same kernel, but lengthscale $\texttt{softplus}(0)$ and output scale $\texttt{softplus}(0)$, to sample $256$ initial distributions -- again with the Lanczos algorithm \citep{pleiss2018constant}.
These $256$ initial conditions are solved with Diffrax's implementation of Dopri8 \citep{prince1981high} using $128$ timesteps.
Some example input/output pairs are in \Cref{figure-pde-data}.
\begin{figure}[t]
\includegraphics[width=\linewidth]{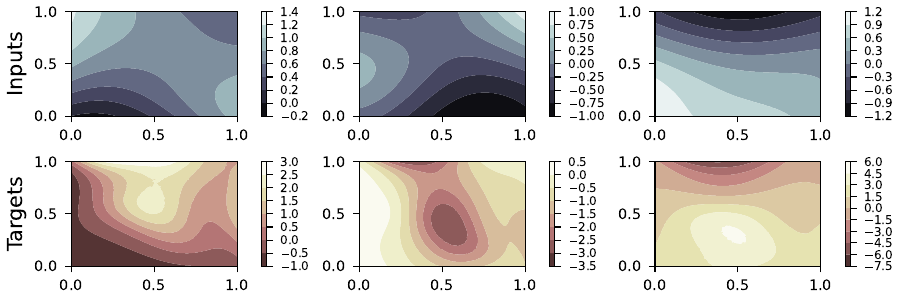}
\caption{Three exemplary input/output pairs from the PDE dataset.}
\label{figure-pde-data}
\end{figure}

This setup is similar to that of the WaveBench dataset \citep{liu2023wavebench}, with the main difference being that the WaveBench dataset uses a slightly different formulation of the wave equation.\footnote{To ensure radiating boundary conditions, \citet{liu2023wavebench} follow \citet{stanziola2023j}'s model of simulating the wave equations as a sytem of first-order equations.} 
We use the one above because it lends itself more naturally to matrix exponentials, which are at the heart of this experiment.

\section{Implementation details for the Bayesian neural network study}
\label{appendix-laplace}

\subsection{Bayesian neural networks with Laplace approximations}
Another possible application of the gradients of matrix functions is marginal-likelihood-based optimisation of Bayesian Neural Networks. 
Suppose $g_\theta(x)$ is the output of a neural network with parameters $\theta \in \mathbb{R}^P$. 
The choice of the model shall be denoted by $\mathcal{M}$ and consist of both continuous and discrete hyperparameters (such as network architecture, likelihood precision, prior precision, etc.). 
For some choice of prior given by 
\begin{align}
p(\theta \mid \mathcal{M})
\end{align}
and likelihood 
\begin{align}
p(y|x, \theta, , \mathcal{M}) = p(y  \mid  x, g_\theta(x), \mathcal{M})
\end{align}
we can specify a Bayesian model. The posterior distribution is then given by: 
\begin{align}
p(\theta, y  \mid  x, \mathcal{M}) \propto p(y  \mid  x, g_\theta(x), \mathcal{M}) p(\theta \mid  \mathcal{M}) d\theta.
\end{align}
The marginal likelihood is given by normalizing constant of this posterior, i.e. 
\begin{align}
p(y \mid x, \mathcal{M}) = \int p(y  \mid  x, g_\theta(x), \mathcal{M}) p(\theta \mid  \mathcal{M}) d\theta.
\end{align}
As suggested by \citet{mackay1992bayesian}, this marginal likelihood can be used for model selection in Bayesian neural networks. 
\citet{immer2021scalable} use the Laplace approximation of the posterior to obtain access to the marginal likelihood of the Bayesian neural network and its stochastic gradients. 

The Laplace approximation of the marginal likelihood is given by:
\begin{equation}
\log p(y \mid x, \mathcal{M}) \approx \log p(y, \theta_\text{MAP}  \mid x, \mathcal{M}) - \frac{1}{2} \log  \det\left( \frac{1}{2\pi} \bm{H}_{\theta_\text{MAP} } \right) 
\end{equation}

where $\bm{H}_{\theta_\text{MAP} } = - \nabla^2_\theta \log p(y, \theta_\text{MAP}  \mid x, \mathcal{M})$. 
Usual choices of the prior are $N(0, \alpha^{-1} \mathbb{I})$. Usually this Hessian is approximated with the \emph{generalized Gauss-Newton} (GGN) matrix \citep{immer2021improving} 
\begin{align}
\bm{H}_{\theta_\text{MAP}} 
\approx A(\alpha) 
\coloneqq 
\sum_{j=1}^J 
[D_\theta g_{\theta_\text{MAP} }])(x_j)^\top 
[D_g^2 \rho](y_j, g_{\theta_\text{MAP}}(x_j)) 
[D_\theta g_{\theta_\text{MAP} }])(x_j)^\top
+ \alpha^2 I
\end{align}
where $D^2\rho$ is the Hessian of the loss, and $D_\theta g$ the Jacobian of $g$ (recall \Cref{equation-gauss-newton-matrix}).
This objective is used to optimize the prior precision of the model or any continuous model hyperparameters. 
Matrix-vector products with the GGN matrix can be accessed through automatic differentiation using Jacobian-vector and vector-Jacobian products.
With these efficient matrix-vector products, one can estimate the log-determinant of GGN using matrix-free techniques like the Lanczos iteration.

To make predictions using the Laplace approximation of the posterior, we also need to sample from the normal distribution ${N}(\theta_\text{MAP}, A^{-1})$. Samples from this distribution can be written as: 
\begin{align}
\theta = \theta_\text{MAP}  + A^{-1/2} \epsilon
\end{align}
where $\epsilon \sim N(0, I)$. The main bottleneck in this computation is the inversion and matrix square root of the GGN matrix, and we implement it with a Lanczos iteration using $f(x) = x^{-1/2}$.
Since the GGN is empirically known to have low-rank \citep{papyan2020traces}, doing a few Lanczos iterations can get us close to an accurate estimation.

\subsection{Experiment setup}
We estimate the diagonal of the GGN stochastically via (``$\circ$'' is the element-wise product) \citep{bekas2007estimator} 
\begin{align}
\diagonal{A} = \Ebb[v \circ Av] \approx \frac{1}{L} \sum_{\ell=1}^L v_\ell \circ A v, \quad \Ebb[v v^\top] = I.
\end{align}
We use $150$ matrix-vector products for both diagonal calibration and our Lanczos-based estimation.
We use $30$ Monte-Carlo samples to estimate the log-likelihoods for evaluating the test metrics, and 
we use \texttt{places365} \citep{zhou2014learning} as an out-of-distribution dataset to compute OOD-AUROC. 
We also compute the expected calibration error (ECE) \citep{naeini2015obtaining} of the model. 

\paragraph{Data:}
We show scalability by doing Laplace approximation on Imagenet1k image classification \citep{deng2009imagenet}. The training set consists of approximately 1.2 million images, each belonging to one of 1000 classes. 
We find that we can take small subsets of this dataset and still converge to the same prior precision. 
Our computational budget allows us to use 10 percent of the samples for each class.
However, even for very small subsamples of the data, we converge to a very similar prior precision.

\paragraph{Method:}
To optimize the prior precision we use the marginal likelihood as the objective. We use the RMSprop optimizer with a learning rate of 0.01 for 100 epochs for optimizing both the diagonal and Lanczos approximations of the GGN.

\newpage
\section*{NeurIPS Paper Checklist}

\begin{enumerate}

\item {\bf Claims}
    \item[] Question: Do the main claims made in the abstract and introduction accurately reflect the paper's contributions and scope?
    \item[] Answer: \answerYes{} %
    \item[] Justification: The main contributions, adjoint systems for the Lanczos and Arnoldi iterations, are explained in \Cref{section-method}. The three case studies are in \Cref{section-case-study-pde,section-case-study-gaussian-processes,section-case-study-bayesian-neural-networks}.
    \item[] Guidelines:
    \begin{itemize}
        \item The answer NA means that the abstract and introduction do not include the claims made in the paper.
        \item The abstract and/or introduction should clearly state the claims made, including the contributions made in the paper and important assumptions and limitations. A No or NA answer to this question will not be perceived well by the reviewers. 
        \item The claims made should match theoretical and experimental results, and reflect how much the results can be expected to generalize to other settings. 
        \item It is fine to include aspirational goals as motivation as long as it is clear that these goals are not attained by the paper. 
    \end{itemize}

\item {\bf Limitations}
    \item[] Question: Does the paper discuss the limitations of the work performed by the authors?
    \item[] Answer: \answerYes{} %
    \item[] Justification: The limitations are discussed in the paragraph titled ``Limitations and future work'' on page 4.
    All assumptions (i.e., \Cref{assumption-matvecs-only,assumption-small-matrix-function-differentiable}) are contextualised in the sentences before and after they are have been introduced.
    \item[] Guidelines:
    \begin{itemize}
        \item The answer NA means that the paper has no limitation while the answer No means that the paper has limitations, but those are not discussed in the paper. 
        \item The authors are encouraged to create a separate "Limitations" section in their paper.
        \item The paper should point out any strong assumptions and how robust the results are to violations of these assumptions (e.g., independence assumptions, noiseless settings, model well-specification, asymptotic approximations only holding locally). The authors should reflect on how these assumptions might be violated in practice and what the implications would be.
        \item The authors should reflect on the scope of the claims made, e.g., if the approach was only tested on a few datasets or with a few runs. In general, empirical results often depend on implicit assumptions, which should be articulated.
        \item The authors should reflect on the factors that influence the performance of the approach. For example, a facial recognition algorithm may perform poorly when image resolution is low or images are taken in low lighting. Or a speech-to-text system might not be used reliably to provide closed captions for online lectures because it fails to handle technical jargon.
        \item The authors should discuss the computational efficiency of the proposed algorithms and how they scale with dataset size.
        \item If applicable, the authors should discuss possible limitations of their approach to address problems of privacy and fairness.
        \item While the authors might fear that complete honesty about limitations might be used by reviewers as grounds for rejection, a worse outcome might be that reviewers discover limitations that aren't acknowledged in the paper. The authors should use their best judgment and recognize that individual actions in favor of transparency play an important role in developing norms that preserve the integrity of the community. Reviewers will be specifically instructed to not penalize honesty concerning limitations.
    \end{itemize}

\item {\bf Theory Assumptions and Proofs}
    \item[] Question: For each theoretical result, does the paper provide the full set of assumptions and a complete (and correct) proof?
    \item[] Answer: \answerYes{} %
    \item[] Justification: The main contributions, \Cref{theorem-adjoint-system-of-arnoldi,theorem-adjoint-system-of-lanczos,corollary-parameter-gradients-arnoldi}, are proven in \Cref{appendix-section-proof-of-corollary-parameter-gradients-arnoldi,appendix-section-proof-of-theorem-adjoint-system-of-lanczos,appendix-section-proof-of-theorem-adjoint-system-of-arnoldi}. \Cref{appendix-section-solving-the-adjoint-system-in-detail} discusses solving the adjoint system in full detail.
    \item[] Guidelines:
    \begin{itemize}
        \item The answer NA means that the paper does not include theoretical results. 
        \item All the theorems, formulas, and proofs in the paper should be numbered and cross-referenced.
        \item All assumptions should be clearly stated or referenced in the statement of any theorems.
        \item The proofs can either appear in the main paper or the supplemental material, but if they appear in the supplemental material, the authors are encouraged to provide a short proof sketch to provide intuition. 
        \item Inversely, any informal proof provided in the core of the paper should be complemented by formal proofs provided in appendix or supplemental material.
        \item Theorems and Lemmas that the proof relies upon should be properly referenced. 
    \end{itemize}

    \item {\bf Experimental Result Reproducibility}
    \item[] Question: Does the paper fully disclose all the information needed to reproduce the main experimental results of the paper to the extent that it affects the main claims and/or conclusions of the paper (regardless of whether the code and data are provided or not)?
    \item[] Answer: \answerYes{} %
    \item[] Justification: The most important information about the experiment setup is a part of the main paper in \Cref{section-case-study-pde,section-case-study-gaussian-processes,section-case-study-bayesian-neural-networks}; further information can be found in \Cref{appendix-experiment-details-gaussian-processes,appendix-section-pde-data,appendix-laplace,appendix-section-low-memory-matrix-vector-products}.
    \Cref{appendix-section-accuracy-loss-hilbert,appendix-section-mode-vjp-wall-times} discuss the setup for \Cref{figure-wall-times-per-vjp,table-accuracy-loss-hilbert}.
    Code will be published upon acceptance.
    \item[] Guidelines:
    \begin{itemize}
        \item The answer NA means that the paper does not include experiments.
        \item If the paper includes experiments, a No answer to this question will not be perceived well by the reviewers: Making the paper reproducible is important, regardless of whether the code and data are provided or not.
        \item If the contribution is a dataset and/or model, the authors should describe the steps taken to make their results reproducible or verifiable. 
        \item Depending on the contribution, reproducibility can be accomplished in various ways. For example, if the contribution is a novel architecture, describing the architecture fully might suffice, or if the contribution is a specific model and empirical evaluation, it may be necessary to either make it possible for others to replicate the model with the same dataset, or provide access to the model. In general. releasing code and data is often one good way to accomplish this, but reproducibility can also be provided via detailed instructions for how to replicate the results, access to a hosted model (e.g., in the case of a large language model), releasing of a model checkpoint, or other means that are appropriate to the research performed.
        \item While NeurIPS does not require releasing code, the conference does require all submissions to provide some reasonable avenue for reproducibility, which may depend on the nature of the contribution. For example
        \begin{enumerate}
            \item If the contribution is primarily a new algorithm, the paper should make it clear how to reproduce that algorithm.
            \item If the contribution is primarily a new model architecture, the paper should describe the architecture clearly and fully.
            \item If the contribution is a new model (e.g., a large language model), then there should either be a way to access this model for reproducing the results or a way to reproduce the model (e.g., with an open-source dataset or instructions for how to construct the dataset).
            \item We recognize that reproducibility may be tricky in some cases, in which case authors are welcome to describe the particular way they provide for reproducibility. In the case of closed-source models, it may be that access to the model is limited in some way (e.g., to registered users), but it should be possible for other researchers to have some path to reproducing or verifying the results.
        \end{enumerate}
    \end{itemize}

\item {\bf Open access to data and code}
    \item[] Question: Does the paper provide open access to the data and code, with sufficient instructions to faithfully reproduce the main experimental results, as described in supplemental material?
    \item[] Answer: \answerYes{} %
    \item[] Justification: Code has been submitted as a part of the supplementary material, and will be published upon acceptance. 
    \item[] Guidelines:
    \begin{itemize}
        \item The answer NA means that paper does not include experiments requiring code.
        \item Please see the NeurIPS code and data submission guidelines (\url{https://nips.cc/public/guides/CodeSubmissionPolicy}) for more details.
        \item While we encourage the release of code and data, we understand that this might not be possible, so “No” is an acceptable answer. Papers cannot be rejected simply for not including code, unless this is central to the contribution (e.g., for a new open-source benchmark).
        \item The instructions should contain the exact command and environment needed to run to reproduce the results. See the NeurIPS code and data submission guidelines (\url{https://nips.cc/public/guides/CodeSubmissionPolicy}) for more details.
        \item The authors should provide instructions on data access and preparation, including how to access the raw data, preprocessed data, intermediate data, and generated data, etc.
        \item The authors should provide scripts to reproduce all experimental results for the new proposed method and baselines. If only a subset of experiments are reproducible, they should state which ones are omitted from the script and why.
        \item At submission time, to preserve anonymity, the authors should release anonymized versions (if applicable).
        \item Providing as much information as possible in supplemental material (appended to the paper) is recommended, but including URLs to data and code is permitted.
    \end{itemize}

\item {\bf Experimental Setting/Details}
    \item[] Question: Does the paper specify all the training and test details (e.g., data splits, hyperparameters, how they were chosen, type of optimizer, etc.) necessary to understand the results?
    \item[] Answer: \answerYes{} %
    \item[] Justification: See the answer to ``4. Experimental Result Reproducibility'' above.
    \item[] Guidelines:
    \begin{itemize}
        \item The answer NA means that the paper does not include experiments.
        \item The experimental setting should be presented in the core of the paper to a level of detail that is necessary to appreciate the results and make sense of them.
        \item The full details can be provided either with the code, in appendix, or as supplemental material.
    \end{itemize}

\item {\bf Experiment Statistical Significance}
    \item[] Question: Does the paper report error bars suitably and correctly defined or other appropriate information about the statistical significance of the experiments?
    \item[] Answer: \answerYes{} %
    \item[] Justification: All case studies report mean and standard deviations of multiple runs. The only exception is the Bayesian neural network example, which uses a single training run (but evaluates test metrics on multiple seeds).
    \item[] Guidelines:
    \begin{itemize}
        \item The answer NA means that the paper does not include experiments.
        \item The authors should answer "Yes" if the results are accompanied by error bars, confidence intervals, or statistical significance tests, at least for the experiments that support the main claims of the paper.
        \item The factors of variability that the error bars are capturing should be clearly stated (for example, train/test split, initialization, random drawing of some parameter, or overall run with given experimental conditions).
        \item The method for calculating the error bars should be explained (closed form formula, call to a library function, bootstrap, etc.)
        \item The assumptions made should be given (e.g., Normally distributed errors).
        \item It should be clear whether the error bar is the standard deviation or the standard error of the mean.
        \item It is OK to report 1-sigma error bars, but one should state it. The authors should preferably report a 2-sigma error bar than state that they have a 96\% CI, if the hypothesis of Normality of errors is not verified.
        \item For asymmetric distributions, the authors should be careful not to show in tables or figures symmetric error bars that would yield results that are out of range (e.g. negative error rates).
        \item If error bars are reported in tables or plots, The authors should explain in the text how they were calculated and reference the corresponding figures or tables in the text.
    \end{itemize}

\item {\bf Experiments Compute Resources}
    \item[] Question: For each experiment, does the paper provide sufficient information on the computer resources (type of compute workers, memory, time of execution) needed to reproduce the experiments?
    \item[] Answer: \answerYes{} %
    \item[] Justification: The introductory part of the appendix contains a paragraph titled ``Compute''.
    \item[] Guidelines:
    \begin{itemize}
        \item The answer NA means that the paper does not include experiments.
        \item The paper should indicate the type of compute workers CPU or GPU, internal cluster, or cloud provider, including relevant memory and storage.
        \item The paper should provide the amount of compute required for each of the individual experimental runs as well as estimate the total compute. 
        \item The paper should disclose whether the full research project required more compute than the experiments reported in the paper (e.g., preliminary or failed experiments that didn't make it into the paper). 
    \end{itemize}
    
\item {\bf Code Of Ethics}
    \item[] Question: Does the research conducted in the paper conform, in every respect, with the NeurIPS Code of Ethics \url{https://neurips.cc/public/EthicsGuidelines}?
    \item[] Answer: \answerYes{} %
    \item[] Justification: We use datasets that are either self-created (\Cref{section-case-study-pde,appendix-section-pde-data}), or common test-cases for machine learning methods (UCI datasets, ImageNet) or numerical algorithms (SuiteSparse matrix collection).
    \item[] Guidelines:
    \begin{itemize}
        \item The answer NA means that the authors have not reviewed the NeurIPS Code of Ethics.
        \item If the authors answer No, they should explain the special circumstances that require a deviation from the Code of Ethics.
        \item The authors should make sure to preserve anonymity (e.g., if there is a special consideration due to laws or regulations in their jurisdiction).
    \end{itemize}

\item {\bf Broader Impacts}
    \item[] Question: Does the paper discuss both potential positive societal impacts and negative societal impacts of the work performed?
    \item[] Answer: \answerNo{} %
    \item[] Justification: This research provides a foundational algorithm for computational sciences, and societal impact is difficult if not impossible to predict. 
    \item[] Guidelines:
    \begin{itemize}
        \item The answer NA means that there is no societal impact of the work performed.
        \item If the authors answer NA or No, they should explain why their work has no societal impact or why the paper does not address societal impact.
        \item Examples of negative societal impacts include potential malicious or unintended uses (e.g., disinformation, generating fake profiles, surveillance), fairness considerations (e.g., deployment of technologies that could make decisions that unfairly impact specific groups), privacy considerations, and security considerations.
        \item The conference expects that many papers will be foundational research and not tied to particular applications, let alone deployments. However, if there is a direct path to any negative applications, the authors should point it out. For example, it is legitimate to point out that an improvement in the quality of generative models could be used to generate deepfakes for disinformation. On the other hand, it is not needed to point out that a generic algorithm for optimizing neural networks could enable people to train models that generate Deepfakes faster.
        \item The authors should consider possible harms that could arise when the technology is being used as intended and functioning correctly, harms that could arise when the technology is being used as intended but gives incorrect results, and harms following from (intentional or unintentional) misuse of the technology.
        \item If there are negative societal impacts, the authors could also discuss possible mitigation strategies (e.g., gated release of models, providing defenses in addition to attacks, mechanisms for monitoring misuse, mechanisms to monitor how a system learns from feedback over time, improving the efficiency and accessibility of ML).
    \end{itemize}
    
\item {\bf Safeguards}
    \item[] Question: Does the paper describe safeguards that have been put in place for responsible release of data or models that have a high risk for misuse (e.g., pretrained language models, image generators, or scraped datasets)?
    \item[] Answer: \answerNA{} %
    \item[] Justification: This paper does not contribute data or models that would require safeguards. 
    \item[] Guidelines:
    \begin{itemize}
        \item The answer NA means that the paper poses no such risks.
        \item Released models that have a high risk for misuse or dual-use should be released with necessary safeguards to allow for controlled use of the model, for example by requiring that users adhere to usage guidelines or restrictions to access the model or implementing safety filters. 
        \item Datasets that have been scraped from the Internet could pose safety risks. The authors should describe how they avoided releasing unsafe images.
        \item We recognize that providing effective safeguards is challenging, and many papers do not require this, but we encourage authors to take this into account and make a best faith effort.
    \end{itemize}

\item {\bf Licenses for existing assets}
    \item[] Question: Are the creators or original owners of assets (e.g., code, data, models), used in the paper, properly credited and are the license and terms of use explicitly mentioned and properly respected?
    \item[] Answer: \answerYes{} %
    \item[] Justification: All assets used in the experiments have been cited. See also the answer to ``9. Code of Ethics''. 
    \item[] Guidelines:
    \begin{itemize}
        \item The answer NA means that the paper does not use existing assets.
        \item The authors should cite the original paper that produced the code package or dataset.
        \item The authors should state which version of the asset is used and, if possible, include a URL.
        \item The name of the license (e.g., CC-BY 4.0) should be included for each asset.
        \item For scraped data from a particular source (e.g., website), the copyright and terms of service of that source should be provided.
        \item If assets are released, the license, copyright information, and terms of use in the package should be provided. For popular datasets, \url{paperswithcode.com/datasets} has curated licenses for some datasets. Their licensing guide can help determine the license of a dataset.
        \item For existing datasets that are re-packaged, both the original license and the license of the derived asset (if it has changed) should be provided.
        \item If this information is not available online, the authors are encouraged to reach out to the asset's creators.
    \end{itemize}

\item {\bf New Assets}
    \item[] Question: Are new assets introduced in the paper well documented and is the documentation provided alongside the assets?
    \item[] Answer: \answerYes{} %
    \item[] Justification: The code (attached to this submission) is documented. The appendices contain all other information.
    \item[] Guidelines:
    \begin{itemize}
        \item The answer NA means that the paper does not release new assets.
        \item Researchers should communicate the details of the dataset/code/model as part of their submissions via structured templates. This includes details about training, license, limitations, etc. 
        \item The paper should discuss whether and how consent was obtained from people whose asset is used.
        \item At submission time, remember to anonymize your assets (if applicable). You can either create an anonymized URL or include an anonymized zip file.
    \end{itemize}

\item {\bf Crowdsourcing and Research with Human Subjects}
    \item[] Question: For crowdsourcing experiments and research with human subjects, does the paper include the full text of instructions given to participants and screenshots, if applicable, as well as details about compensation (if any)? 
    \item[] Answer: \answerNA{} %
    \item[] Justification: This study does not involve crowdsourcing nor human subjects.
    \item[] Guidelines:
    \begin{itemize}
        \item The answer NA means that the paper does not involve crowdsourcing nor research with human subjects.
        \item Including this information in the supplemental material is fine, but if the main contribution of the paper involves human subjects, then as much detail as possible should be included in the main paper. 
        \item According to the NeurIPS Code of Ethics, workers involved in data collection, curation, or other labor should be paid at least the minimum wage in the country of the data collector. 
    \end{itemize}

\item {\bf Institutional Review Board (IRB) Approvals or Equivalent for Research with Human Subjects}
    \item[] Question: Does the paper describe potential risks incurred by study participants, whether such risks were disclosed to the subjects, and whether Institutional Review Board (IRB) approvals (or an equivalent approval/review based on the requirements of your country or institution) were obtained?
    \item[] Answer: \answerNA{} %
    \item[] Justification: This paper does not involve crowdsourcing nor research with human subjects.
    \item[] Guidelines:
    \begin{itemize}
        \item The answer NA means that the paper does not involve crowdsourcing nor research with human subjects.
        \item Depending on the country in which research is conducted, IRB approval (or equivalent) may be required for any human subjects research. If you obtained IRB approval, you should clearly state this in the paper. 
        \item We recognize that the procedures for this may vary significantly between institutions and locations, and we expect authors to adhere to the NeurIPS Code of Ethics and the guidelines for their institution. 
        \item For initial submissions, do not include any information that would break anonymity (if applicable), such as the institution conducting the review.
    \end{itemize}

\end{enumerate}

\end{document}